\documentclass{article}

\usepackage[preprint]{neurips_2026}

\usepackage[utf8]{inputenc} 
\usepackage[T1]{fontenc}    
\usepackage{hyperref}       
\usepackage{url}            
\usepackage{booktabs}       
\usepackage{amsfonts}       
\usepackage{nicefrac}       
\usepackage{microtype}      
\usepackage{xcolor}         

\usepackage{amsmath}
\usepackage{amssymb}
\usepackage{amsthm}
\usepackage{algorithmic}
\usepackage{algorithm}
\usepackage{bauer, bauer_e}

\usepackage{color, colortbl}

\definecolor{LightCyan}{rgb}{0.88,1,1}
\definecolor{ColumnColor}{rgb}{1.0,0.941454,0.955287}

\newtheorem{inductivebias}{Inductive Bias}

\usepackage{graphicx}

\title{Rethinking Structural Anomaly Detection:\\ From Decision Boundaries to Projection Operators}

\author{%
  Alexander Bauer$^{1,2}$\\
  $^{1}$Machine Learning Group, TU Berlin\\
  $^{2}$BIFOLD, Berlin, Germany\\
  {\tt\small alexander.bauer@tu-berlin.de} \\
}

\begin{document}

\maketitle

\begin{abstract}
Most existing anomaly detection methods rely on estimating a probability density or learning
an enclosing decision boundary, implicitly assuming that normal data occupies a region of
non-zero volume in the ambient space. In contrast, structural anomaly detection considers data
that lies near a low-dimensional manifold, creating a mismatch between the inductive bias of existing
methods and the structure of the data, often resulting in degraded performance.
To address this mismatch, we introduce a geometric perspective.
Specifically, we learn a projection operator onto the manifold of normal samples and define
a sample as anomalous if it is altered by this projection.
This formulation naturally integrates the inductive bias of manifold-supported data and reframes anomaly detection in terms
of a projection residual, thereby resolving issues arising from modeling degenerate distributions.
Notably, it provides a unifying interpretation of reconstruction-based methods by explaining
their success and failure in terms of projection quality.
In particular, it explains the strong generalization ability of projection-aligned models as a consequence of contraction behavior toward the manifold.
Moreover, by decoupling anomaly detection from probabilistic modeling, it reduces the tendency
to misclassify rare but normal samples, a widely recognized limitation of existing approaches.
Empirically, we demonstrate that projection-aligned methods achieve strong performance,
outperforming boundary-based methods while improving upon existing reconstruction-based approaches.
\end{abstract}

\section{Introduction}
\label{sec:section1}

Conceptually, anomaly detection (AD) aims to identify observations that deviate
from a notion of normality.
In many practical scenarios, such as industrial inspection or medical imaging,
anomalies arise as structural deviations in input images, including surface defects,
shape deformations, or other irregularities disrupting normal visual patterns.

The vast majority of existing AD methods \cite{ScholkopfPSSW01, TaxD04, Parzen62, KimKYC23, RuffGDSVBMK18, TackMJS20,  RuffKVMSKDM21}
define normality either through a probability density,
or more generally, a decision boundary enclosing normal samples. 
Crucially, this relies on an implicit assumption that normal data occupies a region
of non-zero volume in the ambient space.
However, this assumption is fundamentally misaligned with high-dimensional perceptual data
such as images, where normal samples concentrate near a low-dimensional manifold.
Since an embedded manifold has no interior with respect to the ambient space,
any enclosing decision boundary necessarily includes unsupported regions,
rendering it conceptually ill-posed.
Employing boundary-based methods in this setting
leads to fundamental practical issues, including unstable learning behavior, sensitivity to noise
in the scoring function, and severe effects of the curse of dimensionality.
Figure~\ref{fig_bauer1} illustrates, using the example of a One-Class Support Vector Machine (OC-SVM) \cite{ScholkopfPSSW01}, how the mismatch between
the volumetric assumption of boundary-based methods and the low-dimensional nature
of manifold-supported data manifests geometrically.
When combined with isotropic radial kernels (e.g., a Gaussian kernel), similarity is propagated
equally in both tangent and normal directions of the data manifold.
Consequently, the kernel width induces an unavoidable trade-off:
large values admit larger off-manifold regions, leading to false negatives,
while small values require dense sampling to ensure manifold coverage and otherwise lead to false positives.
Although real-world data exhibits a non-zero volume through measurement error and finite-resolution effects,
these introduce only a thin neighborhood around the underlying manifold and do not alter its intrinsic structure in practice.

\begin{figure*}[t]
\centering
\includegraphics[scale = 0.724]{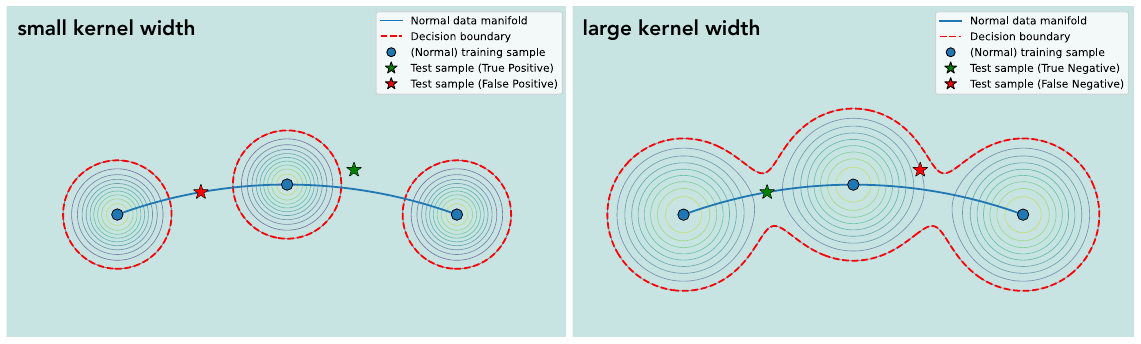}
\caption{
Illustration of the failure mode of an OC-SVM with an RBF kernel on manifold-supported data. Although normal samples concentrate near a low-dimensional manifold (blue curve), the isotropic similarity induced by the kernel in ambient space produces a volumetric inlier region. The effective radius of this region is controlled by the kernel width: decreasing it reduces coverage of the manifold and increases false positives (left), whereas increasing it expands the inlier region and increases false negatives (right). Note the change in classification of the star-shaped samples.
}
\label{fig_bauer1}
\end{figure*}

As the concept of an enclosing decision boundary becomes problematic for manifold-supported data,
a natural alternative is to measure the abnormality of a sample via its distance to the manifold.
However, this idea immediately encounters fundamental challenges:
the manifold itself is unknown and must be inferred from finitely many samples,
and computing distances requires solving a non-trivial optimization problem
to identify the closest point on the manifold.
In high-dimensional settings, both aspects are computationally and statistically intractable.
Instead, we propose a different approach.
The key observation is that, for natural images, the global shape of the data manifold
is induced by strong spatial correlations between pixels,
which drastically reduce the effective degrees of freedom.
Specifically, these dependencies give rise to a form of \emph{index-induced regularity},
rendering the value of a pixel largely predictable from its local neighborhood.
This enables implicit modeling of the global geometry from individual
samples by learning a mapping that corrects deviations from the manifold.
When applied to anomalous data, this mapping acts as an approximate projection operator,
mapping inputs toward the manifold without requiring explicit geometric representation.
In this way, global geometry becomes tied to local predictive structure shared across training samples, supporting the learning and generalization of a projection mapping.
The learned projection, in turn, induces an implicit characterization of the data manifold as the zero-set of a reconstruction functional.
This provides a global description of normality without requiring explicit geometric estimation or costly optimization to identify nearest points on the manifold.

To this end, we reinterpret and extend the family of correction-based methods~\cite{ZavrtanikKS21, LiSYP21, ZavrtanikKS212, InTra, math12243988}
for \emph{structural anomaly detection} (SAD) by viewing them as approximations of a learned projection operator onto the manifold of normal data.
Given an input, the model produces a corrected version projected toward the manifold, while abnormality is quantified through the resulting projection residual.
At the pixel level, the discrepancy between input and projection enables localization of anomalous regions.
Importantly, this perspective provides a unified framework for understanding and improving existing approaches.
First, it aligns the learning objective with the intrinsic geometry of the data, introducing an inductive bias consistent with manifold-supported distributions.
Second, it offers a principled explanation for the limitations of decision-boundary methods and the success of reconstruction-based approaches in terms of projection quality.
Third, by decoupling anomaly detection from probabilistic modeling, it reduces the tendency to misclassify rare but normal samples.
Fourth, it provides a geometric interpretation of generalization through contractive behavior toward the manifold, where diverse perturbations map to consistent normal representations.
Collectively, these insights provide a roadmap for improved algorithm design.
In particular, they motivate geometry-aware regularization through Jacobian constraints,
iterative refinement via fixed-point dynamics, and improved corruption strategies
that encourage stable projection behavior and stronger generalization.

The remainder of the paper is organized as follows.
Section~\ref{sec:section3} summarizes the training framework used to approximate the projection operator, including the inference procedure during prediction.
Section~\ref{sec:section4} presents a formal analysis of this framework by highlighting the conservative projection as an optimal solution to SAD and discussing how such a mapping is approximated through training.
We further examine how the projection-based perspective provides a geometric interpretation of model generalization and outline directions for future algorithmic improvements.
Section~\ref{sec:section5} shows the experimental evaluation on two established industrial anomaly detection benchmarks and compares the presented framework with state-of-the-art approaches.
Finally, Section~\ref{sec:section6} concludes the paper.

\section{Methodology}
\label{sec:section3}

Formally, we approximate the nonlinear projection operator using an autoencoder
\[
f_{\boldsymbol{\theta}} : [0,1]^{h \times w \times 3} \rightarrow [0,1]^{h \times w \times 3},
\]
where $\boldsymbol{\theta}$ denotes the learnable parameters.
Both input and output correspond to RGB images of resolution $h \times w$.
In particular, we make no architectural assumptions. Specifically, the model is
not required to contain a bottleneck.
The term autoencoder is used in a general sense, referring simply to mappings with identical input and output dimensionality.

\subsection{Training}

The model is trained in a self-supervised fashion using input-output pairs of normal samples and their corrupted versions.
Let $\bfx \in [0,1]^{h \times w \times 3}$ denote a normal (anomaly-free) image.
A corrupted version $\hat{\bfx}$ is obtained by altering randomly selected regions,
specified by a real-valued mask $\mathbf{M} \in [0,1]^{h \times w \times 3}$.
Its complement is defined as $\bar{\mathbf{M}} := \mathbf{1} - \mathbf{M}$, where $\mathbf{1}$
denotes the tensor of ones.

We use a loss formulation established in prior work~\cite{math12243988}
and optimize it with respect to the model parameters $\boldsymbol{\theta}$:
\begin{equation}
\label{eq_21041620}
\mathcal{L}(\hat{\bfx}, \bfx, \mathbf{M}; \boldsymbol{\theta}) := \frac{1-\lambda}{\|\bar{\mathbf{M}}\|_1}
\|\bar{\mathbf{M}} \odot (f_{\boldsymbol{\theta}}(\hat{\bfx}) - \bfx)\|_2^2
+
\frac{\lambda}{\|\mathbf{M}\|_1}
\|\mathbf{M} \odot (f_{\boldsymbol{\theta}}(\hat{\bfx}) - \bfx)\|_2^2,
\end{equation}
where $\odot$ denotes elementwise tensor multiplication, $\|\cdot\|_p$ the $\ell^p$-norm,
and $\lambda \in [0,1]$ balances the contribution of corrupted versus uncorrupted regions.

There are multiple ways of choosing a corruption pattern for generating the training data.
An efficient and powerful augmentation technique proposed in \cite{math12243988}
is to use an additional dataset (e.g., DTD \cite{cimpoi14describing}) of background images $\mathcal{B}$
offering a variation of structural patterns not occurring in the normal data.
Given a normal image $\bfx \in \mathcal{M}$, a corruption pattern $\bfy \in \mathcal{B}$, and
a randomly selected smooth mask $\mathbf{M}$, a corrupted version is created 
according to $\hat{\bfx} = \mathbf{M} \odot \bfy + \bar{\mathbf{M}} \odot \bfx$.


\subsection{Test-Time Detection}

After training, anomaly localization is performed by comparing an input $\hat{\bfx}$ with its reconstruction $f_{\boldsymbol{\theta}}(\hat{\bfx})$. To this end, we define a pixel-wise discrepancy function
\[
\Delta :
[0,1]^{h \times w \times 3}
\times
[0,1]^{h \times w \times 3}
\rightarrow
[0,1]^{h \times w}.
\]
Different choices for $\Delta$ exist in the literature, including Mean Squared Error (MSE), Structural Similarity Index Measure (SSIM)~\cite{BergmannLFSS19}, and Gradient Magnitude Similarity (GMS)~\cite{XueZMB14}, each capturing distinct aspects of reconstruction quality.
A binary segmentation mask for anomalous regions can be extracted by thresholding the pixel-wise map of anomaly scores.
Independently of the specific choice of $\Delta$, applying spatial smoothing prior to thresholding improves stability.

For image-level detection, a global anomaly score is obtained by aggregating the values of the anomaly map, typically via summation. Alternative strategies, such as taking the maximum or restricting the aggregation to the largest responses, can reduce sensitivity to the spatial extent of anomalies. The overall inference pipeline is illustrated in Figure~\ref{fig_bauer3}.
\begin{figure}[t]
\centering
\includegraphics[scale = 0.78]{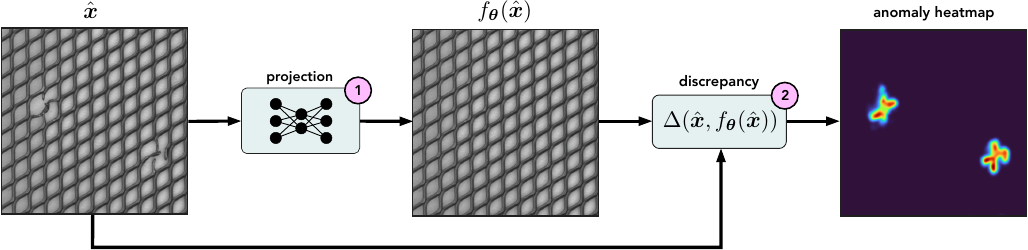}
\caption{
Illustration of our anomaly detection process.
Given an input $\hat{\bfx}$, a trained model first produces an output $f_{\bftheta}(\hat{\bfx})$,
which preserves normal regions and replaces irregularities with locally consistent patterns.
Second, we compute a pixel-wise discrepancy map
$\Delta(\hat{\bfx}, f_{\bftheta}(\hat{\bfx})) \in [0,1]^{h \times w}$
between the input and the output to localize anomalous regions.
}
\label{fig_bauer3}
\end{figure}

In essence, this conceptually simple and principled framework enables accurate and efficient structural anomaly detection, as demonstrated in the experimental section.

\section{Formal Analysis: Inductive Biases, Approximation and Generalization}
\label{sec:section4}

This section formalizes the projection-based perspective as a principled framework for structural anomaly detection (SAD).
We first show that it introduces the missing inductive bias required for manifold-supported data, providing a foundation for both detection and localization.
We then identify the conservative projection as an optimal solution to SAD and discuss how it is approximated through the training framework presented in Section~\ref{sec:section3}.
Building on this view, we provide a geometric interpretation of model generalization through contraction toward the data manifold and discuss future directions for algorithmic development, including iterative refinement through fixed-point dynamics and geometry-aware regularization.
\subsection{Geometric Inductive Bias via a Projection Operator}
\label{subsec:bias}
We now formalize the inductive biases required for detecting and localizing structural deviations.

\begin{inductivebias}[Fixed-Point Condition]
\label{def_22041512}
Let $\mathcal{M} \subset \mathbb{R}^d$ denote the set of normal data points.
A reconstruction mapping $f : \mathbb{R}^d \to \mathbb{R}^d$ must satisfy the fixed-point condition with respect to $\mathcal{M}$:
\begin{equation}
\label{E_22041718}
f(\bfx) = \bfx \quad \Longleftrightarrow \quad \bfx \in \mathcal{M}.
\end{equation}
\end{inductivebias}

This property ensures identity reconstruction exclusively on the data manifold
while excluding trivial identity mappings in the ambient space, a behavior often violated by regularized autoencoders.
It therefore constitutes a minimal requirement for anomaly detection:
any mapping satisfying this condition is sufficient for the detection task,
as off-manifold inputs necessarily violate the fixed-point condition.
In particular, any projection operator $f$ with $\operatorname{Im}(f) = \mathcal{M}$
satisfies this property.

Vanilla and many regularized autoencoders are often interpreted as learning the structure
of the underlying data manifold \cite{abs-1712-07788, pmlr-v130-connor21a}.
However, this interpretation is imprecise, as such models focus on reconstructing normal samples,
leaving their behavior in the ambient space largely unconstrained.
Moreover, commonly proposed regularization strategies, including reduced bottleneck capacity
or Jacobian penalties \cite{AlainB14}, are not sufficient to restrict near-identity reconstruction to the data manifold,
a phenomenon frequently observed in the visual domain \cite{BhattadRF18, abs-2506-10233}.
As a result, these models often provide only a weak separation signal between normal and anomalous samples.

In contrast, assuming the model accurately realizes the behavior prescribed by the inductive bias in $(\ref{E_22041718})$,
the residual mapping $\Phi(\bfx) := f(\bfx) - \bfx$ fully captures the normal data manifold as
\begin{equation}
\label{E_21041544}
\mathcal{M} = \Phi^{-1}(\mathbf{0}).
\end{equation}
If $\Phi$ is smooth and its Jacobian $D\Phi(\bfx)$ has constant rank $d-k$ in a
neighborhood of $\mathcal{M}$, then by the implicit function theorem, the level set
$\Phi^{-1}(\mathbf{0})$ locally defines a $k$-dimensional embedded submanifold of $\mathbb{R}^d$.
In this sense, the reconstruction mapping $f$ admits a natural interpretation as a projection operator
that implicitly captures the geometry of the normal manifold as its fixed-point set,
thereby separating it from the ambient space and yielding a principled criterion for
distinguishing between normal and anomalous samples.

Notably, this perspective directly aligns with the core objective of anomaly detection,
namely learning a notion of normality.
In contrast to common approaches that only approximate this objective through surrogate losses,
the proposed formulation explicitly defines normality via Eq.~(\ref{E_21041544}),
while anomalous samples are implicitly characterized by their deviation from it.
Indeed, this focus on normality is directly reflected in the projection behavior of a corresponding model,
where large regions of the ambient space map to similar points on the manifold,
rendering variation in off-manifold directions largely irrelevant.
Importantly, this significantly reduces the effective complexity of the learning problem, which is governed
by the intrinsic dimensionality of the normal manifold rather than that of the ambient space,
thereby mitigating the curse of dimensionality and improving generalization.

Beyond separating normal and anomalous samples, domains such as images (or time series) additionally require localization of anomalous regions at the pixel level.
For this purpose, we leverage the residual mapping $\Phi$, which provides a spatial discrepancy map enabling localization of anomalous regions, as introduced in Section~\ref{sec:section3}.

Given an anomalous sample identified with a vector $\hat{\bfx} \in \mathbb{R}^d$,
we define the anomalous region as a minimal set of pixels $S \subset \{1, ..., d\}$
that must be corrected to produce a normal sample $\bfx \in \mathcal{M}$.
That is, $S$ is uniquely defined and $\hat{\bfx}_{\bar{S}} = \bfx_{\bar{S}}$,
where $\bar{S} := \{1, ..., d\} \setminus S$ denotes the complement of $S$,
and $\hat{\bfx}_S$, $\bfx_S$ denote vector slices restricted to indices in $S$.
It is worth noting that the uniqueness assumption does not generally hold,
as discussed in \cite{math12243988}.
However, the resulting ambiguity arising from the finite resolution of digitized images is typically limited in extent
and can often be neglected in practice.
Formally, this ambiguity vanishes in the limit of infinite resolution.

Having established the formal definition of a \emph{structural anomaly} as an off-manifold sample $\hat{\bfx} \not\in \mathcal{M}$,
with uniquely localized anomalous part $\hat{\bfx}_S$, we now can derive an optimal projection operator
for the task of SAD,
given as 
\begin{equation}
\label{E_24041100}
\Pi_{\mathrm{con}}(\bfx) := \arg\min_{\bfy \in \mathcal{M}} \|\bfx - \bfy\|_0,
\end{equation}
which we refer to as a conservative projection.
Here, $\|\cdot\|_0$ denotes the so-called $\ell_0$ pseudo-norm,
defined as the number of nonzero components of a vector $\|\bfx\|_0 = \#\{i : x_i \neq 0\}$.
In this context, minimizing $\|\bfx-\bfy\|_0$ enforces a conservative correction
by minimizing the support of the modification, i.e., the number of altered components
required to project $\bfx$ onto $\mathcal{M}$.

Unlike orthogonal projection, which minimizes geometric distance,
the conservative projection minimizes correction support.
Since the objective $\|\bfx-\bfy\|_0$ depends on support cardinality,
the resulting operator is inherently non-smooth.
Under uniqueness assumptions on the correction support,
it may nevertheless exhibit piecewise-continuous or piecewise-differentiable behavior.
Specifically, within regions where the optimal support remains unchanged, the mapping
reduces to a constrained correction problem on the manifold, while non-smoothness arises
at transitions between distinct supports.
Consequently, such behavior may be locally approximated by a differentiable autoencoder
that learns a smooth relaxation of conservative correction.

Under uniqueness of the correction support, the conservative projection $\Pi_{\mathrm{con}}$
yields an optimal solution for SAD.
First, it naturally satisfies Inductive Bias~\ref{def_22041512}.
Anomalous regions are then identified from the individual components of the discrepancy map
$\Phi$, such that a pixel $i \in \{1,\dots,d\}$ is anomalous if and only if $|\Phi_i(\hat{\bfx})| > 0$.
In practice, however, deviations from the idealized assumptions, together with reconstruction
inaccuracies and numerical noise, motivate the use of thresholds $\tau,\nu > 0$:
specifically, $\|\Phi(\hat{\bfx})\|_2 > \tau$ for anomaly detection and $|\Phi_i(\hat{\bfx})| > \nu$ for pixel-wise localization.

The conservative projection further reveals an additional inductive bias for projection operators
required for accurate localization: minimal correction support.
While the fixed-point condition ensures that anomalous samples are altered by the reconstruction mapping $f$,
thereby enabling detection, accurate localization additionally requires preservation of normal regions.

\begin{inductivebias}[Minimal Correction Support]
\label{def_22040000}
A reconstruction mapping $f : \mathbb{R}^d \rightarrow \mathbb{R}^d$ must preserve all normal components while modifying only the anomalous regions.
Formally, for any input $\hat{\bfx}$ with anomalous support
$S \subseteq \{1,\dots,d\}$,
\begin{equation}
\label{E_23040853}
f_i(\hat{\bfx}) \neq \hat{x}_i
\quad \Longleftrightarrow \quad
i \in S.
\end{equation}
\end{inductivebias}

It is worth noting that the requirement in (\ref{E_23040853}) does not strictly imply $f(\hat{\bfx}) \in \mathcal{M}$.
While preservation of normal regions is necessary, alteration of the corrupted region is both necessary and sufficient for accurate localization.
In particular, Inductive Bias~\ref{def_22040000} implies Inductive Bias~\ref{def_22041512}.
In our framework, these inductive biases are approximated by minimizing
reconstruction loss on partially occluded images with varying opacity.
Specifically, the objective in (\ref{eq_21041620}) directly promotes preservation of normal regions
while replacing corrupted regions with manifold-consistent patterns.

\subsection{On the Approximation of the Conservative Projection Operator}
\label{subsec:sec2}
The following analysis clarifies how our training
framework encodes the inductive biases discussed in the previous section
by approximating the conservative projection operator.

For this purpose, we define the set-valued mappings
$N \colon \mathbb{R}^d \rightarrow 2^{\{1,\dots,d\}}$
and $A \colon \mathbb{R}^d \rightarrow 2^{\{1,\dots,d\}}$,
which extract the indices corresponding to the normal and anomalous regions of an input, respectively.
As discussed previously, we assume that these mappings are well-defined and that
the assignment to normal versus anomalous regions is unique.
In particular, $N(\hat{\bfx}) = \{1, ..., d\} \setminus A(\hat{\bfx})$ for all $\hat{\bfx} \in \mathbb{R}^d$.
Furthermore, we assume that a regular conditional distribution
$p(\bfx_A \mid \bfx_N)$ exists, that
$\mathbb{E}[\|\bfx\|^2] < \infty$, and that the distribution over corrupted samples
$q(\hat{\bfx})$ is such that, for $q$-a.e.\ corrupted input
$\hat{\bfx}$, the conditioning values
$\hat{\bfx}_{N(\hat{\bfx})}$ lie in the support of the marginal distribution
of $\bfx_{N(\hat{\bfx})}$ under $p$.
Finally, all conditional expectations below are also conditioned on the mechanism by which the corruption process
generates patterns on $A(\hat{\bfx})$.
We omit this dependence from the notation for simplicity.

\begin{theorem}
\label{T_23040830}
Let $p(\bfx)$ denote the distribution of normal data and let
$q(\hat{\bfx})$ denote a distribution of corrupted samples,
with $k \in \mathbb{N}$.
Consider the following unconstrained optimization problem:
\begin{equation}
\label{op_24041055}
\begin{aligned}
& \underset{\bftheta \in \mathbb{R}^k}{\text{minimize}}
& & 
\mathbb{E}_{q(\hat{\bfx})}\!\left[
\mathbb{E}_{p(\bfx)}\!\left[
\|f_{\bftheta}(\hat{\bfx})-\bfx\|^2
\;\middle|\;
\bfx_{N(\hat{\bfx})}=\hat{\bfx}_{N(\hat{\bfx})}
\right]
\right].
\end{aligned}
\end{equation}
Provided sufficient capacity of the hypothesis class $\{f_{\bftheta}\}$,
every minimizer $f^* := f_{\bftheta^*}$ satisfies, for $q$-a.e.\ $\hat{\bfx}$,
\begin{equation}
\label{E_23040803}
f^*_{N(\hat{\bfx})}(\hat{\bfx})
=
\hat{\bfx}_{N(\hat{\bfx})}
\qquad\text{and}\qquad
f^*_{A(\hat{\bfx})}(\hat{\bfx})
=
\mathbb{E}_{p(\bfx)}\!\left[
\bfx_{A(\hat{\bfx})}
\;\middle|\;
\bfx_{N(\hat{\bfx})}
=
\hat{\bfx}_{N(\hat{\bfx})}
\right].
\end{equation}
\end{theorem}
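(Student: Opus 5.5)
The argument is the standard characterization of the $L^2$-optimal predictor as a conditional expectation, applied separately to the two blocks of coordinates determined by $N(\hat{\bfx})$ and $A(\hat{\bfx})$. Since the squared Euclidean norm splits over any partition of the indices, for every fixed $\hat{\bfx}$ the inner conditional risk decomposes as
\begin{equation*}
\mathbb{E}\bigl[\|f(\hat{\bfx})-\bfx\|^2 \mid \bfx_N=\hat{\bfx}_N\bigr]
=
\|f_N(\hat{\bfx})-\hat{\bfx}_N\|^2
+
\mathbb{E}\bigl[\|f_A(\hat{\bfx})-\bfx_A\|^2 \mid \bfx_N=\hat{\bfx}_N\bigr],
\end{equation*}
writing $N=N(\hat{\bfx})$ and $A=A(\hat{\bfx})$. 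The first term needs no expectation, because under the regular conditional distribution the coordinates $\bfx_N$ are almost surely equal to the conditioning value $\hat{\bfx}_N$. This value lies in the support of the marginal by assumption, so the conditional law is meaningful for $q$-a.e.\ $\hat{\bfx}$.

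For the second term I would use the bias--variance identity. Set $m(\hat{\bfx}) := \mathbb{E}[\bfx_A \mid \bfx_N=\hat{\bfx}_N]$, which is finite because $\mathbb{E}[\|\bfx\|^2]<\infty$ gives finite conditional second moments almost surely. Adding and subtracting $m$ and noting that the cross term has zero conditional mean yields
\begin{equation*}
\mathbb{E}\bigl[\|f_A(\hat{\bfx})-\bfx_A\|^2 \mid \cdot\bigr]
=
\|f_A(\hat{\bfx})-m(\hat{\bfx})\|^2
+
\mathbb{E}\bigl[\|\bfx_A-m(\hat{\bfx})\|^2 \mid \cdot\bigr].
\end{equation*}
The last term does not depend on $f$. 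Integrating over $q$, the objective therefore equals
\begin{equation*}
\mathbb{E}_q\bigl[\|f_N(\hat{\bfx})-\hat{\bfx}_N\|^2+\|f_A(\hat{\bfx})-m(\hat{\bfx})\|^2\bigr]
\end{equation*}
plus a constant $C$ that is independent of $\bftheta$.

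Pointwise, the integrand is nonnegative and vanishes exactly at the target map $g$ given by $g_N(\hat{\bfx})=\hat{\bfx}_N$ and $g_A(\hat{\bfx})=m(\hat{\bfx})$. The map $g$ is measurable, since $N$ and $A$ are (implicitly) measurable and $m$ is a version of a conditional expectation; it is also square-integrable. The sufficient-capacity hypothesis is read as saying that $g$, up to $q$-null sets, is realized by some $f_{\bftheta}$. Hence the infimum of the objective is exactly $C$.

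For any minimizer $f^*$ the objective also equals $C$. The excess is $\mathbb{E}_q$ of a nonnegative function, so that function vanishes $q$-a.e., which gives both identities in (\ref{E_23040803}).

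The main obstacle is the measure-theoretic bookkeeping in the first step rather than the algebra. One has to justify that conditioning on $\bfx_{N(\hat{\bfx})}=\hat{\bfx}_{N(\hat{\bfx})}$ pins $\bfx_N$ down almost surely, given that the index set itself varies with $\hat{\bfx}$. I would handle this by partitioning over the finitely many possible index sets $N\subseteq\{1,\dots,d\}$, working with a fixed regular conditional distribution on each piece, and treating the suppressed dependence on the corruption mechanism as part of the conditioning σ-algebra. I would also state the capacity assumption precisely as "$g$ is attained in the class up to $q$-null sets." Without that reading the conclusion can fail, since the minimizer could then exist only as a limit of $f_{\bftheta}$ rather than as an element of the class.
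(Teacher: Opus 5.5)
Your proposal is correct and is essentially the paper's argument. The paper likewise minimizes the conditional risk pointwise in the output value $f(\hat{\bfx})$: it expands the square and uses convexity of the resulting quadratic to obtain the unique minimizer $\mathbb{E}[\bfx \mid \bfx_{N}=\hat{\bfx}_{N}]$, whose $N$-block is $\hat{\bfx}_{N}$. It then integrates against $q$ and argues that equality of the expected risks forces this output $q$-a.e. Your bias--variance split is a cosmetic variant that makes the nonnegative excess explicit, which is exactly what that last step uses. Your reading of ``sufficient capacity'' as realizability of the target map also matches the paper's remark after the theorem.

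One point that neither you nor the paper addresses: cancelling the $\bftheta$-independent term requires $C<\infty$, since otherwise every $\bftheta$ is a minimizer. This does not follow from $\mathbb{E}_p\|\bfx\|^2<\infty$ for an arbitrary $q$, but it holds, for instance, for bounded data such as images in $[0,1]^d$.
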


The above characterization describes the Bayes-optimal correction operator $f^*$.
For a parameterized hypothesis class $\{f_{\bftheta}\}$,
the conclusion holds whenever this class is sufficiently expressive to realize the mapping in (\ref{E_23040803}).
The training objective in (\ref{eq_21041620}) can be interpreted as an approximation
of the optimization problem in (\ref{op_24041055}), where the weighting parameter $\lambda$
controls the relative emphasis placed on the two defining properties of an optimal correction operator in (\ref{E_23040803}).
As a result, a trained model tends to preserve normal regions while replacing corrupted regions
with locally consistent patterns, thereby approximating the behavior of the conservative projection operator defined in (\ref{E_24041100}).

Theorem \ref{T_23040830} highlights an important side effect of reconstruction-based methods.
When correction of an anomalous region is ambiguous, the reconstruction loss drives the model
to produce an average over all feasible restorations under the training distribution.
Provided sufficient training data and model capacity, the approximation quality to the conservative projection
is therefore primarily limited by reconstruction ambiguity within corrupted regions.
Fortunately, in practice, this averaging effect is typically small due to sparse sampling of the training data.
Moreover, for anomaly detection, perfect reconstruction is not strictly necessary,
since deviation from the anomalous pattern is sufficient, as discussed in Section \ref{subsec:bias}.
We provide a formal proof of Theorem \ref{T_23040830} in Supplementary Section \ref{subsec:proof_theorem}.

Nevertheless, higher reconstruction fidelity in corrupted regions may be beneficial in practice,
as it can sharpen the localization signal and improve visual interpretability.
One potential solution is to replace the autoencoder with a diffusion model conditioned on the input image.
This, however, introduces substantially higher computational cost.
Instead, we propose an iterative approach that repeatedly applies the autoencoder to the input, eventually converging to a fixed point on the manifold.
Convergence is guaranteed by the contractive behavior of the model toward the data manifold, according to
\begin{equation}
\operatorname{dist}\bigl(f(\hat{\bfx}), \mathcal{M}\bigr)
\leq
\rho \cdot \operatorname{dist}\bigl(\hat{\bfx}, \mathcal{M}\bigr)
\end{equation}
for $0 < \rho < 1$,
where the contraction emerges from the specific form of our corruption process based on partial occlusions with varying opacity levels.
We provide a formal theorem and additional details in Supplementary Section~\ref{subsec:Fixed-Point}.
Figure~\ref{fig_conv} illustrates several examples of convergence toward a fixed point under iterative model application, demonstrating progressively improved reconstruction fidelity.
\begin{figure}[t]
\centering
\includegraphics[scale = 0.64]{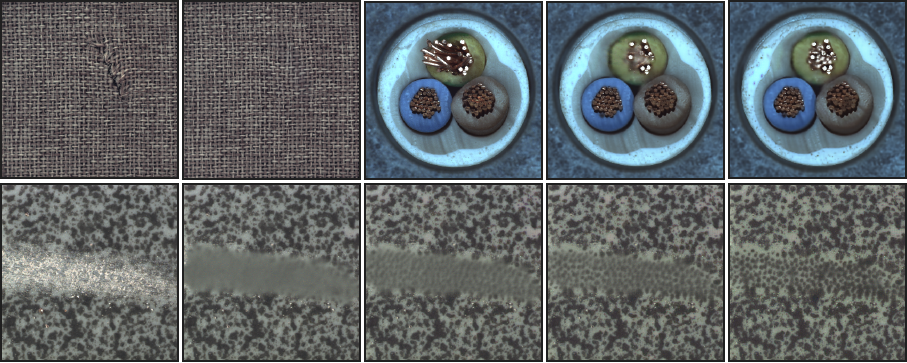}
\caption{
Illustration of improved reconstruction fidelity and varying convergence speeds to a fixed-point
under iterative application of our model, separately trained on different categories of the MVTec AD dataset.
Each example shows (from left to right) the input together with reconstruction results obtained through repeated model application,
converging toward a fixed point on the manifold.
}
\label{fig_conv}
\end{figure}

\subsection{Generalization through Contraction toward the Data Manifold}

The strong generalization ability of correction-based autoencoders can be understood
as a consequence of an emergent contraction behavior toward the data manifold.
Rather than learning separate responses to individual corruption patterns,
the model acquires a correction behavior that is largely independent of the specific perturbation
and instead restores normality by replacing irregularities with manifold-consistent content.
In particular, this contraction provides a unified explanation for why the model (surprisingly well) generalizes from artificial corruptions
introduced during training to naturally occurring anomalies present in real images,
unlike supervised segmentation approaches, where the learned behavior is primarily driven by annotation-specific supervision.

More precisely, two complementary directions of generalization must be distinguished.
First, \emph{generalization along the manifold} refers to reconstructing unseen but valid samples.
This behavior is well understood and can largely be attributed to the inductive bias of convolutional architectures,
which are naturally aligned with the spatial structure of image data.
Second, and more importantly, \emph{generalization toward the manifold}
refers to correcting arbitrary off-manifold perturbations.
This second direction cannot be explained purely through interpolation between training samples,
but instead requires understanding how a global correction behavior emerges from local supervision.

We argue that this contraction behavior arises through the interaction of three mutually reinforcing components.
First, natural images exhibit strong \emph{index-induced regularity},
meaning that valid image configurations are highly constrained by local context.
Neighboring pixels contain predictive information about one another,
making the content of corrupted or missing regions largely inferable from their surroundings.
Second, convolutional architectures are inherently aligned with this regularity, as their learned features are local in nature and shared across spatial locations.
Third, the training objective in (\ref{eq_21041620}) promotes the inductive biases established previously
by preserving normal regions while restoring corrupted regions through locally consistent content.
Collectively, these three components induce a specific contraction behavior toward the manifold,
biased toward preserving normal content while replacing irregularities with locally consistent structure.
As a consequence, this learned contraction transfers across samples and perturbations,
allowing the model to generalize beyond the specific corruptions encountered during training.

This contraction perspective also explains our empirical observation that overfitting, even after prolonged training over many epochs, is comparatively difficult in this framework.
While convolutional architectures and index-induced regularity provide the conditions under which contraction can emerge, they are not sufficient on their own, as segmentation models operate under similar architectural and data constraints.
The decisive factor is therefore the training objective, as it selectively promotes the missing inductive biases introduced previously and thereby determines which solutions become preferential during optimization.
By encouraging preservation of normal content together with restoration toward manifold-consistent structure, the objective favors shared correction behavior over sample-specific memorization.
In contrast, segmentation objectives optimize agreement with annotation masks, which may contain ambiguity, noise, or policy-dependent biases.
As a result, optimization can favor shortcut correlations tied to annotation statistics rather than to the intrinsic structure of the data, making overfitting substantially more likely.
This also aligns with our empirical observation that segmentation models may initially generalize well but increasingly overfit to annotation-specific corruption patterns as training progresses.

\subsection{Outlook and Future Investigations}

The primary goal of this paper is to motivate a new geometric perspective on structural anomaly detection.
We argued that learning a non-linear projection operator onto the manifold of normal data
provides a principled solution to the task by directly modeling normality through projection behavior.
Building on this conceptual shift, the proposed perspective opens several promising directions for future research,
supported by tools from geometric analysis.

First, as discussed previously, the projection-based perspective naturally motivates
iterative application of the model in the spirit of fixed-point theory.
In particular, this view explains the shortcomings of training with reconstruction losses
and provides a way of iteratively refining reconstruction fidelity in corrupted regions.
We provide more details in Supplementary Section~\ref{subsec:Fixed-Point}

Second, it suggests new forms of geometry-motivated regularization.
Future methods may explicitly enforce geometric properties by shaping the form and stability of the learned projection mapping.
Examples include Jacobian-based regularization to promote contraction in off-manifold directions, as well as constraints encouraging projection-specific properties such as identity on the manifold and idempotency. See Supplementary Section~\ref{subsec:jacobian} for more details.

Third, it provides a new interpretation of model generalization as a consequence of how well the learned mapping respects the geometry of the data.
In our approach, this geometry is implicitly encoded through contraction toward the data manifold.
In particular, this reduces the effective complexity of the learning problem to the intrinsic dimensionality of the underlying manifold.
We believe this geometric perspective on generalization extends beyond anomaly detection and may provide a useful framework for understanding representation learning more broadly.

\section{Experiments}
\label{sec:section5}
In this section we evaluate the training framework introduced in Section~\ref{sec:section3}.
Based on our geometric motivation we refer to the resulting model as a projecting autoencoder (PAE).
Our goal is not to establish marginal improvements on saturated benchmarks, but to empirically support the central hypothesis of this work: reconstruction-based methods that more closely approximate the inductive biases identified in Section~\ref{sec:section4} exhibit stronger and more robust anomaly detection performance.

\textbf{Datasets.}
We evaluate the performance of our method on two established benchmarks for industrial anomaly detection: MVTec AD~\cite{BergmannBFSS21} and VisA~\cite{ZouJPZD22}.
Together, these datasets cover a broad range of industrial inspection scenarios, comprising 16,175 images across 27 categories.

\textbf{Training setup.} We apply category-specific augmentations such as random rotations and flips, while reserving 5\% of the training data for validation.
A separate model is trained per category using a modified U-Net~\cite{RonnebergerFB15} with dilated bottleneck convolutions and optimized with Adam~\cite{KingmaB14} at a learning rate of $10^{-4}$.
We set $\lambda = 0.5$ in Equation~(\ref{eq_21041620}) and evaluate using the SSIM metric.
Texture categories are resized to $512 \times 512$, whereas object categories are downscaled to $256 \times 256$.
Accordingly, object categories use a larger dilation kernel ($5$ instead of $3$).
Several VisA categories contain substantial background clutter that is not annotated as anomalous.
Where necessary, we apply foreground segmentation as a preprocessing step using a U-Net trained independently of the anomaly detection task.
The resulting masks suppress irrelevant background regions prior to training, mitigating annotation noise that disproportionately affects projection-based methods.

\textbf{Results and Conceptual Comparison.}
We report the Area Under the Receiver Operating Characteristic curve (AUROC) at the image level (I-AUROC) for anomaly detection and at the pixel level (P-AUROC) for anomaly segmentation
in Table~\ref{table_results}.
In addition, we report the Per-Region Overlap (P-PRO) and Average Precision (P-AP), both computed at the pixel level.
The best and second-best results are highlighted in red and blue, respectively.
For recent competing methods, we report scores from previously published results.
Simple classical and reconstruction-based baselines are
re-evaluated in our pipeline where necessary.
Consequently, individual methods may have been evaluated under slightly different experimental conditions.
However, these differences do not affect the main observations summarized below.

Table~\ref{table_results} supports three observations consistent with our geometric interpretation.
First, boundary-based approaches~\cite{reiss2021panda} built on pretrained ResNet backbones, including OC-SVM \cite{ScholkopfPSSW01} and DeepSVDD (DSVDD) \cite{RuffGDSVBMK18}, are substantially outperformed by PAE.
Second, among reconstruction-based methods, approaches that more closely approximate projection behavior, including RIAD~\cite{ZavrtanikKS21}, CutPaste~\cite{LiSYP21}, and DRAEM~\cite{ZavrtanikKS212}, achieve markedly stronger performance than standard autoencoder AE$_{\text{SSIM}}$~\cite{BergmannLFSS19}.
However, these methods either rely on simplistic perturbations or deviate from a reconstruction objective aligned with the projection goal, resulting in a weaker approximation of the desired inductive biases.
Third, PAE remains competitive with engineered hybrid approaches, including SimpleNet~\cite{LiuZXW23}, PatchCore~\cite{RothPZSBG22}, RealNet~\cite{10658311}, PBAS~\cite{10716437}, PMSR~\cite{LI2026113331}, and GLASS~\cite{ChenLLZ24}, which lack a clear geometric interpretation.
While PAE performs on par with the strongest methods in terms of AUROC, it achieves higher AP scores.
We note that the PRO metric is highly recall-oriented and therefore less trustworthy.
This behavior is consistent with our interpretation of PAE as an approximation of a conservative projection operator that minimizes correction support, producing sharp and robust segmentation masks.
Figure~\ref{fig_examples} in the supplementary material illustrates the segmentation quality of anomalous regions achieved by PAE.

\begin{table*}[t]
\caption{Experimental results for image-level anomaly detection and pixel-level segmentation.}
\label{table_results}
\centering
\setlength{\tabcolsep}{4pt}
\renewcommand{\arraystretch}{1.08}

\scalebox{0.625}{
\begin{tabular}{ll|ccc|ccc|cccccc|>{\columncolor{gray!12}}c}
\toprule
Dataset  & Metric & AE$_{\text{SSIM}}$ & OC-SVM & DSVDD & RIAD & CutPaste & DRAEM & SimpleNet & PatchCore & RealNet & PBAS & PMSR & GLASS & \textbf{PAE} \\

\midrule
MVTec AD & I-AUROC  & 63.0 & 70.8 & 77.9 & 91.7 & 95.2 & 98.0 & 99.6 & 99.2 & 99.7 & \textcolor{blue}{99.8} & \textcolor{red}{99.9}  & \textcolor{red}{99.9} & \textcolor{red}{99.9}\\
                  & P-AUROC & 87.0 & --     &  --     & 94.2 & 96.0 & 97.3 & 98.1 & 98.4 & \textcolor{blue}{99.0} & 98.6 & 98.8  & 98.9 & \textcolor{red}{99.4}\\
                  & P-PRO      & 69.4 & --     &  --     & 89.9 & 90.1  & 91.3 & 91.1 & 92.4 & 93.5 & \textcolor{red}{97.3} & \textcolor{blue}{96.6} & 96.3 & 96.5\\
                  & P-AP         & 23.0 & --     & --      & 53.2 & 56.0 & 68.4 & 58.1 & 57.1 & \textcolor{blue}{69.4} & --     & --      & -- & \textcolor{red}{78.6}\\

\midrule
VisA & I-AUROC  & 54.9 & 59.1 & 62.2 & 81.1 & 85.1 & 88.7 & 97.1 & 94.7 & 97.8 & 97.7 & \textcolor{red}{98.5}  & \textcolor{blue}{98.2} & \textcolor{red}{98.5}\\
        & P-AUROC & 77.4 & -- & -- & 88.7 & 90.6 & 93.5 & 98.2 & 98.5 & 98.8 & \textcolor{blue}{98.8} & 98.0  & 98.6 & \textcolor{red}{98.9}\\
        & P-PRO      & 61.1 & -- & -- & 69.0 & 70.3 & 72.4  & 90.7 & 91.8 & --     & \textcolor{red}{97.1} & \textcolor{blue}{92.3}  & 90.8 & 82.2\\
        & P-AP         & 17.4 & -- & -- & 24.8 & 24.2 & 26.6 & 37.3      & --     & --     & \textcolor{blue}{47.6} & --      & --     & \textcolor{red}{53.8}\\

\midrule
Mean & I-AUROC   & 59.0 & 65.0 & 70.1 & 86.4 & 90.2 & 93.4 & 98.4 & 97.0 & 98.8 & 98.8 & \textbf{\textcolor{red}{99.2}} & \textcolor{blue}{99.1} & \textbf{\textcolor{red}{99.2}}\\
          & P-AUROC & 82.2 & --  & -- & 91.5 & 93.3 & 95.4 & 98.2 & 98.5 & \textcolor{blue}{98.9} & 98.7 & 98.4 & 98.8 & \textbf{\textcolor{red}{99.2}}\\
          & P-PRO      & 65.3 & --  & -- & 79.5 & 80.2 & 81.9 & 90.9 & 92.1 & --      & \textbf{\textcolor{red}{97.2}} & \textcolor{blue}{94.5} & 93.6 & 89.4\\
          & P-AP         & 20.2 & --  & -- & 39.0 & 40.1 & 47.5 & \textcolor{blue}{47.7} & --      & --      & --     & --      & -- & \textbf{\textcolor{red}{66.2}}\\

\bottomrule
\end{tabular}}
\end{table*}

\section{Conclusion}
\label{sec:section6}

We argued that classical decision-boundary approaches to AD are fundamentally misaligned
with manifold-supported data, where normal samples occupy a low-dimensional subspace embedded in the ambient space.
In high-dimensional settings, modeling normality through enclosing boundaries or density estimation
becomes increasingly ill-posed and susceptible to the curse of dimensionality.
Instead, we introduced a geometric perspective based on learning a non-linear projection operator onto the manifold of normal data.
We identified the inductive biases required for SAD, namely the fixed-point condition and minimal correction support,
and introduced the conservative projection operator as the corresponding optimal solution.

Furthermore, the projection perspective explains the success of correction-based reconstruction methods
through contraction toward the manifold.
This contraction explains why models generalize from artificial corruptions introduced during training
to naturally occurring anomalies in real images.
Our training objective promotes preservation of normal content together with restoration toward locally consistent patterns.
As a result, the learned mapping becomes increasingly determined by data geometry rather than by individual corruption patterns.
Beyond enabling generalization, contraction toward the manifold effectively reduces the complexity of the learning problem
to the intrinsic dimensionality of the manifold, thereby mitigating the curse of dimensionality.
While the proposed formulation provides a principled framework for SAD,
it relies on deviations in local predictive structure. In contrast, logical anomalies may fully preserve
the appearance statistics of natural images, producing no projection residual and therefore cannot be fully addressed by our framework.

The geometric perspective also opens several directions for future research.
Projection operators naturally connect AD to fixed-point theory,
enabling iterative refinement schemes and a dynamical interpretation of convergence toward the manifold.
Moreover, the geometric formulation motivates new regularization strategies, including Jacobian-based constraints
that encourage contraction in off-manifold directions while preserving variation along the manifold,
offering a foundation for future theoretical and algorithmic developments.

%

\bibliography{references}

@article{ScholkopfPSSW01,
  author    = {Bernhard Sch{\"{o}}lkopf and
               John C. Platt and
               John Shawe{-}Taylor and
               Alexander J. Smola and
               Robert C. Williamson},
  title     = {Estimating the Support of a High-Dimensional Distribution},
  journal   = {Neural Comput.},
  volume    = {13},
  number    = {7},
  pages     = {1443--1471},
  year      = {2001}
}

@article{TaxD04,
  author    = {David M. J. Tax and
               Robert P. W. Duin},
  title     = {Support Vector Data Description},
  journal   = {Mach. Learn.},
  volume    = {54},
  number    = {1},
  pages     = {45--66},
  year      = {2004}
}

@inproceedings{RuffGDSVBMK18,
  author    = {Lukas Ruff and
               Nico G{\"o}rnitz and
               Lucas Deecke and
               Shoaib Ahmed Siddiqui and
               Robert A. Vandermeulen and
               Alexander Binder and
               Emmanuel M{\"u}ller and
               Marius Kloft},
  editor    = {Jennifer G. Dy and
               Andreas Krause},
  title     = {Deep One-Class Classification},
  booktitle = {Proc. 35th Int. Conf. Mach. Learn.},
  series    = {Proc. Mach. Learn. Res.},
  volume    = {80},
  pages     = {4390--4399},
  year      = {2018}
}

@article{ZavrtanikKS21,
  author    = {Vitjan Zavrtanik and
               Matej Kristan and
               Danijel Skocaj},
  title     = {Reconstruction by inpainting for visual anomaly detection},
  journal   = {Pattern Recognit.},
  volume    = {112},
  pages     = {107706},
  year      = {2021}
}

@inproceedings{LiSYP21,
  author    = {Chun-Liang Li and
               Kihyuk Sohn and
               Jinsung Yoon and
               Tomas Pfister},
  title     = {CutPaste: Self-Supervised Learning for Anomaly Detection and Localization},
  booktitle = {Proc. IEEE Conf. Comput. Vis. Pattern Recognit.},
  pages     = {9664--9674},
  year      = {2021}
}

@inproceedings{InTra,
  author    = {Jonathan Pirnay and
               Keng Chai},
  title     = {Inpainting Transformer for Anomaly Detection},
  booktitle = {Proc. Int. Conf. Image Anal. Process.},
  series    = {Lect. Notes Comput. Sci.},
  volume    = {13232},
  pages     = {394--406},
  year      = {2022}
}

@article{BhattadRF18,
  author    = {Anand Bhattad and
               Jason Rock and
               David A. Forsyth},
  title     = {Detecting Anomalous Faces with ``No Peeking'' Autoencoders},
  journal   = {CoRR},
  volume    = {abs/1802.05798},
  year      = {2018},
  eprint    = {1802.05798},
  archivePrefix = {arXiv},
  primaryClass  = {cs.CV}
}

@article{BergmannBFSS21,
  author    = {Paul Bergmann and
               Kilian Batzner and
               Michael Fauser and
               David Sattlegger and
               Carsten Steger},
  title     = {The MVTec Anomaly Detection Dataset: {A} Comprehensive Real-World
               Dataset for Unsupervised Anomaly Detection},
  journal   = {Int. J. Comput. Vis.},
  volume    = {129},
  number    = {4},
  pages     = {1038--1059},
  year      = {2021}
}

@inproceedings{BergmannLFSS19,
  author    = {Paul Bergmann and
               Sindy L{\"o}we and
               Michael Fauser and
               David Sattlegger and
               Carsten Steger},
  title     = {Improving Unsupervised Defect Segmentation by Applying Structural
               Similarity to Autoencoders},
  booktitle = {Proc. Int. Conf. Comput. Vis. Theory Appl.},
  pages     = {372--380},
  year      = {2019}
}

@inproceedings{KingmaB14,
  author    = {Diederik P. Kingma and
               Jimmy Ba},
  title     = {Adam: {A} Method for Stochastic Optimization},
  booktitle = {Proc. Int. Conf. Learn. Represent.},
  year      = {2015}
}

@inproceedings{ZavrtanikKS212,
  author    = {Vitjan Zavrtanik and
               Matej Kristan and
               Danijel Skocaj},
  title     = {DR{\AE}M -- {A} Discriminatively Trained Reconstruction Embedding for
               Surface Anomaly Detection},
  booktitle = {Proc. IEEE Int. Conf. Comput. Vis.},
  pages     = {8310--8319},
  year      = {2021}
}

@inproceedings{LiuZXW23,
  author    = {Zhikang Liu and
               Yiming Zhou and
               Yuansheng Xu and
               Zilei Wang},
  title     = {SimpleNet: {A} Simple Network for Image Anomaly Detection and Localization},
  booktitle = {Proc. IEEE Conf. Comput. Vis. Pattern Recognit.},
  pages     = {20402--20411},
  year      = {2023}
}

@inproceedings{RothPZSBG22,
  author    = {Karsten Roth and
               Latha Pemula and
               Joaquin Zepeda and
               Bernhard Sch{\"o}lkopf and
               Thomas Brox and
               Peter V. Gehler},
  title     = {Towards Total Recall in Industrial Anomaly Detection},
  booktitle = {Proc. IEEE Conf. Comput. Vis. Pattern Recognit.},
  pages     = {14298--14308},
  year      = {2022}
}

@inproceedings{ZouJPZD22,
  author    = {Yang Zou and
               Jongheon Jeong and
               Latha Pemula and
               Dongqing Zhang and
               Onkar Dabeer},
  title     = {SPot-the-Difference Self-Supervised Pre-Training for Anomaly Detection
               and Segmentation},
  booktitle = {Proc. Eur. Conf. Comput. Vis.},
  series    = {Lect. Notes Comput. Sci.},
  volume    = {13690},
  pages     = {392--408},
  year      = {2022}
}

@article{RuffKVMSKDM21,
  author    = {Lukas Ruff and
               Jacob R. Kauffmann and
               Robert A. Vandermeulen and
               Wojciech Samek and
               Marius Kloft and
               Thomas G. Dietterich and
               Klaus-Robert M{\"u}ller},
  title     = {A Unifying Review of Deep and Shallow Anomaly Detection},
  journal   = {Proc. IEEE},
  volume    = {109},
  number    = {5},
  pages     = {756--795},
  year      = {2021}
}

@article{OptMPMP,
  author    = {Alexander Bauer and
               Shinichi Nakajima and
               Nico G{\"o}rnitz and
               Klaus-Robert M{\"u}ller},
  title     = {Optimizing for Measure of Performance in Max-Margin Parsing},
  journal   = {IEEE Trans. Neural Netw. Learn. Syst.},
  pages     = {1--5},
  year      = {2019}
}

@article{BauerSM17,
  author    = {Alexander Bauer and
               Shinichi Nakajima and
               Klaus-Robert M{\"u}ller},
  title     = {Efficient Exact Inference with Loss Augmented Objective in Structured Learning},
  journal   = {IEEE Trans. Neural Netw. Learn. Syst.},
  volume    = {28},
  number    = {11},
  pages     = {2566--2579},
  year      = {2017}
}

@inproceedings{Bauer2019,
  author    = {Alexander Bauer and
               Shinichi Nakajima and
               Nico G{\"o}rnitz and
               Klaus-Robert M{\"u}ller},
  title     = {Partial Optimality of Dual Decomposition for {MAP} Inference in Pairwise {MRF}s},
  booktitle = {Proc. Int. Conf. Artif. Intell. Stat.},
  series    = {Proc. Mach. Learn. Res.},
  volume    = {89},
  pages     = {1696--1703},
  year      = {2019}
}

@article{math11122628,
  author    = {Alexander Bauer and
               Shinichi Nakajima and
               Klaus-Robert M{\"u}ller},
  title     = {Polynomial-Time Constrained Message Passing for Exact {MAP} Inference
               on Discrete Models with Global Dependencies},
  journal   = {Mathematics},
  volume    = {11},
  number    = {12},
  pages     = {2628},
  year      = {2023}
}

@article{math12243988,
  author    = {Alexander Bauer and
               Shinichi Nakajima and
               Klaus-Robert M{\"u}ller},
  title     = {Self-Supervised Autoencoders for Visual Anomaly Detection},
  journal   = {Mathematics},
  volume    = {12},
  number    = {24},
  pages     = {3988},
  year      = {2024}
}

@article{Parzen62,
  author  = {E. Parzen},
  title   = {On Estimation of a Probability Density Function and Mode},
  journal = {Ann. Math. Stat.},
  volume  = {33},
  number  = {3},
  pages   = {1065--1076},
  year    = {1962}
}

@article{KimKYC23,
  author  = {Minkyung Kim and
             Junsik Kim and
             Jongmin Yu and
             Jun Kyun Choi},
  title   = {Active Anomaly Detection Based on Deep One-Class Classification},
  journal = {Pattern Recognit. Lett.},
  volume  = {167},
  pages   = {18--24},
  year    = {2023}
}

@inproceedings{TackMJS20,
  author    = {Jihoon Tack and
               Sangwoo Mo and
               Jongheon Jeong and
               Jinwoo Shin},
  title     = {{CSI}: Novelty Detection via Contrastive Learning on Distributionally
               Shifted Instances},
  booktitle = {Adv. Neural Inf. Process. Syst.},
  year      = {2020}
}

@article{AlainB14,
  author  = {Guillaume Alain and
             Yoshua Bengio},
  title   = {What Regularized Auto-Encoders Learn from the Data-Generating Distribution},
  journal = {J. Mach. Learn. Res.},
  volume  = {15},
  number  = {110},
  pages   = {3563--3593},
  year    = {2014}
}

@inproceedings{RonnebergerFB15,
  author    = {Olaf Ronneberger and
               Philipp Fischer and
               Thomas Brox},
  title     = {U-Net: Convolutional Networks for Biomedical Image Segmentation},
  booktitle = {Proc. Med. Image Comput. Comput.-Assist. Interv.},
  series    = {Lect. Notes Comput. Sci.},
  volume    = {9351},
  pages     = {234--241},
  year      = {2015}
}

@inproceedings{cimpoi14describing,
  author    = {M. Cimpoi and
               S. Maji and
               I. Kokkinos and
               S. Mohamed and
               A. Vedaldi},
  title     = {Describing Textures in the Wild},
  booktitle = {Proc. IEEE Conf. Comput. Vis. Pattern Recognit.},
  year      = {2014}
}

@article{XueZMB14,
  author  = {Wufeng Xue and
             Lei Zhang and
             Xuanqin Mou and
             Alan C. Bovik},
  title   = {Gradient Magnitude Similarity Deviation: {A} Highly Efficient Perceptual
             Image Quality Index},
  journal = {IEEE Trans. Image Process.},
  volume  = {23},
  number  = {2},
  pages   = {684--695},
  year    = {2014}
}

@inproceedings{10658311,
  author    = {Ximiao Zhang and
               Min Xu and
               Xiuzhuang Zhou},
  title     = {RealNet: {A} Feature Selection Network with Realistic Synthetic Anomaly
               for Anomaly Detection},
  booktitle = {Proc. IEEE Conf. Comput. Vis. Pattern Recognit.},
  pages     = {16699--16708},
  year      = {2024}
}

@inproceedings{ChenLLZ24,
  author    = {Qiyu Chen and
               Huiyuan Luo and
               Chengkan Lv and
               Zhengtao Zhang},
  title     = {A Unified Anomaly Synthesis Strategy with Gradient Ascent for Industrial
               Anomaly Detection and Localization},
  booktitle = {Proc. Eur. Conf. Comput. Vis.},
  series    = {Lect. Notes Comput. Sci.},
  volume    = {15125},
  pages     = {37--54},
  year      = {2024}
}

@article{LI2026113331,
  author  = {Lanxiao Li and
             Chunjuan Yan and
             Da Song and
             Binghui Wang and
             Chuanxu Wang},
  title   = {A Prototype Correction Multi-Scale Feature Reconstruction Network for
             Industrial Anomaly Detection},
  journal = {Pattern Recognit.},
  volume  = {177},
  pages   = {113331},
  year    = {2026}
}

@article{abs-1712-07788,
  author        = {Dejiao Zhang and
                   Yifan Sun and
                   Brian Eriksson and
                   Laura Balzano},
  title         = {Deep Unsupervised Clustering Using Mixture of Autoencoders},
  journal       = {CoRR},
  volume        = {abs/1712.07788},
  year          = {2017},
  eprint        = {1712.07788},
  archivePrefix = {arXiv}
}

@inproceedings{pmlr-v130-connor21a,
  author    = {Marissa Connor and
               Gregory Canal and
               Christopher Rozell},
  title     = {Variational Autoencoder with Learned Latent Structure},
  booktitle = {Proc. Int. Conf. Artif. Intell. Stat.},
  series    = {Proc. Mach. Learn. Res.},
  volume    = {130},
  pages     = {2359--2367},
  year      = {2021}
}

@article{abs-2506-10233,
  author        = {Ana Lawry Aguila and
                   Peirong Liu and
                   Oula Puonti and
                   Juan Eugenio Iglesias},
  title         = {Conditional Diffusion Models for Guided Anomaly Detection in Brain
                   Images Using Fluid-Driven Anomaly Randomization},
  journal       = {CoRR},
  volume        = {abs/2506.10233},
  year          = {2025},
  eprint        = {2506.10233},
  archivePrefix = {arXiv}
}

@inproceedings{reiss2021panda,
  author    = {Tal Reiss and
               Niv Cohen and
               Liron Bergman and
               Yedid Hoshen},
  title     = {{PANDA}: Adapting Pretrained Features for Anomaly Detection and Segmentation},
  booktitle = {Proc. IEEE Conf. Comput. Vis. Pattern Recognit.},
  pages     = {2806--2814},
  year      = {2021}
}

@article{10716437,
  author    = {Qiyu Chen and
               Huiyuan Luo and
               Han Gao and
               Chengkan Lv and
               Zhengtao Zhang},
  title     = {Progressive Boundary Guided Anomaly Synthesis for Industrial Anomaly Detection},
  journal   = {IEEE Trans. Circuits Syst. Video Technol.},
  volume    = {35},
  number    = {2},
  pages     = {1193--1208},
  year      = {2025}
}
\bibliographystyle{abbrv}


\appendix
\section{Supplementary Material}

This supplementary material provides additional theoretical analysis and proofs supporting the main paper.
Section~\ref{subsec:proof_theorem} contains a formal proof for Theorem \ref{T_23040830}.
Section~\ref{subsec:Fixed-Point} discusses theoretical guarantees for convergence to a fixed-point on the manifold
under iterative model application.
Section~\ref{subsec:jacobian} provides additional discussion of Jacobian-based regularization.
We note that the theorems and proofs presented here remain preliminary and may be refined in future revisions.

\subsection{Proof of Theorem \ref{T_23040830}}
\label{subsec:proof_theorem}

We first prove a simpler claim.
\begin{lemma}
\label{lemma_08071708}
Let $p(\bfx)$ denote the probability distribution of normal data and
$\hat{\bfx} \in \mathbb{R}^d$ a fixed anomalous sample with normal region $N(\hat{\bfx})$.
Consider the unconstrained optimization problem
\begin{equation}
\begin{aligned}
& \underset{\bftheta \in \mathbb{R}^d}{\text{minimize}}
& & \mathbb{E}_{p(\bfx)}\!\left[\|f_{\bftheta}(\hat{\bfx}) - \bfx\|^2 \,\middle|\,
\bfx_{N(\hat{\bfx})} = \hat{\bfx}_{N(\hat{\bfx})} \right]\\
\end{aligned}
\end{equation}
Provided sufficient capacity of the hypothesis class $\{f_{\bftheta}\}$,
every minimizer $f^* := f_{\bftheta^*}$ satisfies
\begin{equation}
\label{E_07061828}
f^*_{N(\hat{\bfx})}(\hat{\bfx}) = \hat{\bfx}_{N(\hat{\bfx})}
\qquad\text{and}\qquad
f^*_{A(\hat{\bfx})}(\hat{\bfx})
= \mathbb{E}_{p(\bfx)}\!\left[\bfx_{A(\hat{\bfx})} \,\middle|\,
\bfx_{N(\hat{\bfx})} = \hat{\bfx}_{N(\hat{\bfx})}\right].
\end{equation}
\end{lemma}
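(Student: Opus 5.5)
Since $\hat{\bfx}$ is fixed, the objective depends on $\bftheta$ only through the single vector $\bfz := f_{\bftheta}(\hat{\bfx}) \in \mathbb{R}^d$. The plan is therefore to reduce the problem to a pointwise minimization over $\bfz$. The capacity assumption supplies the link back to parameters: every $\bfz \in \mathbb{R}^d$ is attained as $f_{\bftheta}(\hat{\bfx})$ for some $\bftheta$. Consequently, $\bftheta^*$ is a minimizer exactly when $\bfz^* = f_{\bftheta^*}(\hat{\bfx})$ minimizes
\[
g(\bfz) := \mathbb{E}_{p(\bfx)}\!\left[\|\bfz - \bfx\|^2 \,\middle|\, \bfx_{N} = \hat{\bfx}_{N}\right]
\]
over $\mathbb{R}^d$. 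Here and below we abbreviate $N := N(\hat{\bfx})$ and $A := A(\hat{\bfx})$.

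Next, since $N$ and $A$ partition the indices, we decompose the squared norm coordinatewise:
\[
\|\bfz-\bfx\|^2 = \|\bfz_N - \bfx_N\|^2 + \|\bfz_A - \bfx_A\|^2 .
\]
Under the conditional law, $\bfx_N = \hat{\bfx}_N$ holds almost surely. This relies on the regular conditional distribution assumed before Theorem~\ref{T_23040830}, together with the support condition on $\hat{\bfx}_N$, which makes the conditioning meaningful. The first term is therefore the deterministic quantity $\|\bfz_N - \hat{\bfx}_N\|^2$. It is nonnegative and vanishes only at $\bfz_N = \hat{\bfx}_N$.

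For the second term we use the standard bias–variance identity. Write $\boldsymbol{\mu} := \mathbb{E}[\bfx_A \mid \bfx_N = \hat{\bfx}_N]$; this is finite because $\mathbb{E}\|\bfx\|^2<\infty$ gives integrability of the conditional law, at least for almost every conditioning value. Then
\[
\mathbb{E}\big[\|\bfz_A-\bfx_A\|^2 \mid \cdot\big] = \|\bfz_A - \boldsymbol{\mu}\|^2 + \mathbb{E}\big[\|\bfx_A-\boldsymbol{\mu}\|^2 \mid \cdot\big],
\]
and the cross term vanishes by the definition of $\boldsymbol{\mu}$. The last summand does not depend on $\bfz$. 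Hence $g$ is a strictly convex quadratic in $\bfz$, and its unique minimizer is $\bfz^* = (\hat{\bfx}_N, \boldsymbol{\mu})$. Every minimizer $f^*$ must therefore satisfy $f^*(\hat{\bfx}) = \bfz^*$, which is exactly (\ref{E_07061828}).

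The only delicate point is measure-theoretic: conditioning on the event $\{\bfx_N = \hat{\bfx}_N\}$ for a single fixed $\hat{\bfx}$, which typically has probability zero. I would handle it by interpreting all conditional expectations through the assumed regular conditional distribution $p(\bfx_A\mid\bfx_N)$ evaluated at $\hat{\bfx}_N$. Under this interpretation both the almost-sure identity $\bfx_N=\hat{\bfx}_N$ and finiteness of the second conditional moment hold. Finiteness holds for a.e.\ conditioning value, which suffices once the lemma is lifted to Theorem~\ref{T_23040830}, where the result is only claimed $q$-a.e. Everything else is routine algebra.
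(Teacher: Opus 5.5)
Your proof is correct and is essentially the paper's argument: both reduce the objective to a strictly convex quadratic in the single output vector $f_{\bftheta}(\hat{\bfx})$ and identify its unique minimizer as the conditional mean, whose $N(\hat{\bfx})$-block collapses to $\hat{\bfx}_{N(\hat{\bfx})}$. The differences are cosmetic---you split coordinates first and use the bias--variance identity, whereas the paper expands the square and sets the gradient to zero---and you make explicit the capacity and finite-second-moment assumptions that the paper uses tacitly. One aside concerns your closing remark rather than the lemma itself: transferring $p$-a.e.\ finiteness to $q$-a.e.\ $\hat{\bfx}$ would need the law of $\hat{\bfx}_{N(\hat{\bfx})}$ under $q$ to be absolutely continuous with respect to the $p$-marginal, which the paper's support condition alone does not give.
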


\begin{proof}
We write $S := A(\hat{\bfx})$ and $\bar{S} := N(\hat{\bfx})$.
Consider the following derivations:
\begin{equation*}
\small
\begin{aligned}
&\hspace*{12pt}\mathbb{E}_{p(\bfx)}\left[\|f(\hat{\bfx}) - \bfx\|^2 \hspace*{2pt} | \hspace*{2pt} \bfx_{\bar{S}} = \hat{\bfx}_{\bar{S}} \right]\\
&= \int \|f(\hat{\bfx}) - \bfx\|^2 p(\bfx | \bfx_{\bar{S}} = \hat{\bfx}_{\bar{S}}) \mathrm{d}\bfx\\
&= \int \left( \|f(\hat{\bfx})\|^2 -2f(\hat{\bfx})^{\top} \bfx + \|\bfx\|^2 \right) p(\bfx | \bfx_{\bar{S}} = \hat{\bfx}_{\bar{S}}) \mathrm{d}\bfx\\
&= \|f(\hat{\bfx})\|^2 \underbrace{\int p(\bfx | \bfx_{\bar{S}} = \hat{\bfx}_{\bar{S}}) \mathrm{d}\bfx}_{= 1} - 2 f(\hat{\bfx})^{\top}\underbrace{\int \bfx p(\bfx | \bfx_{\bar{S}} = \hat{\bfx}_{\bar{S}}) \mathrm{d}\bfx}_{\mathbb{E}[\bfx | \bfx_{\bar{S}} = \hat{\bfx}_{\bar{S}}]} + \underbrace{\int \|\bfx\|^2 p(\bfx | \bfx_{\bar{S}} = \hat{\bfx}_{\bar{S}}) \mathrm{d}\bfx}_{\mathbb{E}[\|\bfx\|^2 | \bfx_{\bar{S}} = \hat{\bfx}_{\bar{S}}]}\\
&= \|f(\hat{\bfx})\|^2 - 2 f(\hat{\bfx})^{\top} \mathbb{E}[\bfx | \bfx_{\bar{S}} = \hat{\bfx}_{\bar{S}}] + \mathbb{E}[\|\bfx\|^2 | \bfx_{\bar{S}} = \hat{\bfx}_{\bar{S}}].
\end{aligned}
\end{equation*}
Note that $\bfmu := f(\hat{\bfx})$ in the above derivation is a constant with respect to the expectation.
We now look for stationary points of the mapping $g \colon \bfmu \mapsto \|\bfmu\|^2 - 2 \bfmu^{\top} \mathbb{E}[\bfx | \bfx_{\bar{S}} = \hat{\bfx}_{\bar{S}}] + \mathbb{E}[\|\bfx\|^2 | \bfx_{\bar{S}} = \hat{\bfx}_{\bar{S}}]$
by setting its gradient to zero and solving for $\bfmu$.
It follows:
\begin{equation*}
\nabla_{\bfmu}g = 2\bfmu^{\top} - 2 \mathbb{E}[\bfx | \bfx_{\bar{S}} = \hat{\bfx}_{\bar{S}}]^{\top} = \bf0^{\top}.
\end{equation*}
Since $g(\bfmu)$ is a convex function, this implies that $f^*(\hat{\bfx}) = \mathbb{E}_{p(\bfx)}[\bfx | \bfx_{\bar{S}} = \hat{\bfx}_{\bar{S}}]$ is the unique optimal output.
Note that $f^*$ itself does not need to be unique.
In particular, any minimizer $f^*$ satisfies $f^*_{S}(\hat{\bfx}) = \mathbb{E}_{p(\bfx)}[\bfx_{S} | \bfx_{\bar{S}} = \hat{\bfx}_{\bar{S}}]$
and $f^*_{\bar{S}}(\hat{\bfx}) = \mathbb{E}_{p(\bfx)}[\bfx_{\bar{S}} | \bfx_{\bar{S}} = \hat{\bfx}_{\bar{S}}] = \hat{\bfx}_{\bar{S}}$.
\end{proof}

\textbf{We now prove Theorem} \ref{T_23040830}.
\begin{proof}
For each fixed $\hat{\bfx}$, define the conditional risk
\[
R_{\hat{\bfx}}(f)
:=
\mathbb{E}_{p(\bfx)}\!\left[
\|f(\hat{\bfx})-\bfx\|^2
\;\middle|\;
\bfx_{N(\hat{\bfx})}=\hat{\bfx}_{N(\hat{\bfx})}
\right].
\]

By Lemma~\ref{lemma_08071708}, every function $f^*$ satisfying (\ref{E_07061828}) minimizes
$R_{\hat{\bfx}}(\cdot)$ for the fixed input $\hat{\bfx}$. Therefore, for any
$f$ and every admissible $\hat{\bfx}$ we have
\[
R_{\hat{\bfx}}(f) \ge R_{\hat{\bfx}}(f^*).
\]
Taking expectation with respect to $\hat{\bfx}\sim q$ yields
\begin{equation}
\label{E_07062136}
\mathbb{E}_{q(\hat{\bfx})}[R_{\hat{\bfx}}(f)]
\;\ge\;
\mathbb{E}_{q(\hat{\bfx})}[R_{\hat{\bfx}}(f^*)].
\end{equation}

Moreover, since for every admissible $\hat{\bfx}$ the optimal output value
$f^*(\hat{\bfx})$ is unique, equality in (\ref{E_07062136}) can hold only if
the output $f(\hat{\bfx})$ equals this unique value for $q$-almost every $\hat{\bfx}$.
Hence, every minimizer $f^*$ of the
optimization problem in Theorem~\ref{T_23040830} satisfies
\[
f^*_{N(\hat{\bfx})}(\hat{\bfx})
=
\hat{\bfx}_{N(\hat{\bfx})}
\qquad\text{and}\qquad
f^*_{A(\hat{\bfx})}(\hat{\bfx})
=
\mathbb{E}_{p(\bfx)}\!\left[
\bfx_{A(\hat{\bfx})}
\;\middle|\;
\bfx_{N(\hat{\bfx})}
=
\hat{\bfx}_{N(\hat{\bfx})}
\right]
\]
for $q$-almost every $\hat{\bfx}$.

\end{proof}
As mentioned in the main body of the paper in Section \ref{subsec:sec2},
the conditional expectations are also conditioned, beyond the values on $N(\hat{\bfx})$,
on the mechanism by which the corruption process $\mathcal{C}$
generates patterns on $A(\hat{\bfx})$.
Therefore, the precise form of the solution in (\ref{E_23040803}) is
\begin{equation}
\label{E_08071526}
f^*_{A(\hat{\bfx})}(\hat{\bfx})
=
\mathbb{E}_{p(\bfx)}\!\left[
\bfx_{A(\hat{\bfx})}
\;\middle|\;
\bfx_{N(\hat{\bfx})}
=
\hat{\bfx}_{N(\hat{\bfx})}, \mathcal{C}_{A(\hat{\bfx})}(\bfx) = \hat{\bfx}_{A(\hat{\bfx})}
\right].
\end{equation}

\subsection{Fixed-Point Convergence under Contraction toward the Manifold}
\label{subsec:Fixed-Point}
The following theorem establishes convergence to a fixed point on the manifold
under repeated application of the model, assuming a contraction property toward the data manifold.

As a first simplifying modeling assumption, we assume that for any given anomalous sample
$\hat{\bfx} \in \mathbb{R}^d \setminus \mathcal{M}$, the anomalous index set
$S := A(\hat{\bfx})$ is uniquely identifiable, since ambiguous boundary cases occur with negligible probability in practice.

As an additional simplifying assumption, we assume that for a given anomalous input
$\hat{\bfx}$, the anomaly index set $S$ remains unchanged (up until convergence)
under iterative application of the model.

\begin{theorem}
\label{T_08071253}
Let $\mathcal{M} \subset \mathbb{R}^d$ be a closed manifold and 
let $f : \mathbb{R}^d \rightarrow \mathbb{R}^d$ be a non-linear operator satisfying:

\begin{enumerate}
\item[(i)] (Conservativity) 
For every input $\hat{\bfx} \in \mathbb{R}^d$ with anomaly index set 
$S$, the operator preserves the complementary 
coordinates corresponding to normal regions:
\[
f_{\bar{S}}(\hat{\bfx}) = \hat{\bfx}_{\bar{S}}.
\]

\item[(ii)] (Slice Contraction) 
There exists $\rho \in (0,1)$ such that for all $\bfu \in \mathbb{R}^d$,
\[
\operatorname{dist}\bigl(f_{S}(\bfu), \mathcal{M}_{S}(\bfu)\bigr)
\leq
\rho \cdot \operatorname{dist}\bigl(\bfu_{S}, \mathcal{M}_{S}(\bfu)\bigr),
\]
where the slice
\[
\mathcal{M}_S(\bfu)
:=
\{\bfx_S \colon \bfx \in \mathcal{M},\ \bfx_{\bar{S}} = \bfu_{\bar{S}}\}
\]
and
\[
\operatorname{dist}(z,A)
:=
\inf_{y\in A} \|z-y\|_2
\qquad
\text{for } z\in\mathbb{R}^k,\ A\subset\mathbb{R}^k,\ k=|S|.
\]

\item[(iii)] (Vanishing correction near the slice)
There exists $C>0$ such that for all $\bfu \in \mathbb{R}^d$,
\[
\|f_S(\bfu)-\bfu_S\|_2
\le
C\,\operatorname{dist}\bigl(\bfu_S,\mathcal{M}_S(\bfu)\bigr).
\]
\end{enumerate}

Then the following statements hold:

\begin{enumerate}
\item[(a)] 
A point $\bfx \in \mathbb{R}^d$ belongs to $\mathcal{M}$ if and only if 
it is a fixed point of $f$.

\item[(b)] 
For every anomalous sample $\hat{\bfx}$ with anomaly index set $S$, 
there exists $\bfx^\ast \in \mathcal{M}$ such that
$\bfx^\ast_{\bar{S}} = \hat{\bfx}_{\bar{S}}$ and
\[
\lim_{n \to \infty} f^{(n)}(\hat{\bfx}) = \bfx^\ast .
\]
\end{enumerate}
\end{theorem}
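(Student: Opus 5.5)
The plan is to reduce everything to the behaviour of the iterates on the fixed coordinate slice. Throughout, $S$ is the anomaly index set of the input. By the standing assumption, $S$ stays unchanged along the iteration, and for points of $\mathcal{M}$ we take $S=\emptyset$.

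\textbf{Part (a).} First, let $\bfx\in\mathcal{M}$. Then $\bfx_S\in\mathcal{M}_S(\bfx)$, so $\operatorname{dist}(\bfx_S,\mathcal{M}_S(\bfx))=0$. Condition (iii) gives $f_S(\bfx)=\bfx_S$, and condition (i) gives $f_{\bar S}(\bfx)=\bfx_{\bar S}$, so $\bfx$ is a fixed point. (If $S=\emptyset$, (i) alone suffices.)

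Conversely, let $f(\bfx)=\bfx$. Condition (ii) gives
\[
\delta:=\operatorname{dist}(\bfx_S,\mathcal{M}_S(\bfx))=\operatorname{dist}(f_S(\bfx),\mathcal{M}_S(\bfx))\le\rho\,\delta ,
\]
so $\delta=0$. Since $\mathcal{M}$ is closed, the slice $\mathcal{M}_S(\bfx)$ is closed: it is the projection onto the $S$-coordinates of $\mathcal{M}\cap\{\bfy_{\bar S}=\bfx_{\bar S}\}$, and because $\bar S$ is fixed this set is identified with a closed subset of $\mathbb{R}^{|S|}$. Hence $\bfx_S\in\mathcal{M}_S(\bfx)$, i.e.\ there is $\bfy\in\mathcal{M}$ with $\bfy_{\bar S}=\bfx_{\bar S}$ and $\bfy_S=\bfx_S$. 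So $\bfx=\bfy\in\mathcal{M}$.

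One subtlety must be flagged: the slice must be nonempty for the distance to be finite. I will assume this implicitly. It is natural given $\hat{\bfx}_{\bar S}$ lies in the support of the normal data, as in Section~\ref{subsec:sec2}.

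\textbf{Part (b).} Set $\bfu^{(n)}:=f^{(n)}(\hat{\bfx})$. By (i) and induction, $\bfu^{(n)}_{\bar S}=\hat{\bfx}_{\bar S}$ for all $n$. Hence the slice $K:=\mathcal{M}_S(\bfu^{(n)})=\mathcal{M}_S(\hat{\bfx})$ does not depend on $n$.

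Let $\delta_n:=\operatorname{dist}(\bfu^{(n)}_S,K)$. Condition (ii) gives $\delta_{n+1}\le\rho\,\delta_n$, so $\delta_n\le\rho^n\delta_0$.

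The key step is to obtain a Cauchy sequence. Here (iii) is essential, since contraction of the distance alone does not prevent drift along the slice. By (iii),
\[
\|\bfu^{(n+1)}-\bfu^{(n)}\|_2=\|f_S(\bfu^{(n)})-\bfu^{(n)}_S\|_2\le C\delta_n\le C\rho^n\delta_0 .
\]
This is summable, so $\bfu^{(n)}$ is Cauchy and converges to some $\bfx^\ast$ with $\bfx^\ast_{\bar S}=\hat{\bfx}_{\bar S}$.

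Finally, the function $\bfz\mapsto\operatorname{dist}(\bfz,K)$ is $1$-Lipschitz, so
\[
\operatorname{dist}(\bfx^\ast_S,K)=\lim_{n\to\infty}\delta_n=0 .
\]
Since $K$ is closed, $\bfx^\ast_S\in K$. Combined with $\bfx^\ast_{\bar S}=\hat{\bfx}_{\bar S}$, this gives $\bfx^\ast\in\mathcal{M}$ by the same slice argument as in (a).

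Continuity of $f$ is not needed here; the limit comes directly from the summable increments. If desired, it is consistent with (a) that $\bfx^\ast$ is a fixed point: by (iii), $\|f_S(\bfx^\ast)-\bfx^\ast_S\|\le C\cdot 0$.

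\textbf{Expected main obstacle.} The convergence of the iterates themselves, rather than just of their distances to the manifold, is the crux, and it is handled by (iii) through the geometric-series bound. A secondary delicate point is closedness and nonemptiness of the slice, which I will justify from the closedness of $\mathcal{M}$ together with the stated support assumption.
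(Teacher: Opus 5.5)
Your proof is correct and follows the paper's argument essentially step for step. At a fixed point, (ii) forces the slice distance to vanish, and closedness of $\mathcal{M}$ (you use the isometric identification of the slice, the paper an approximating sequence) gives membership in $\mathcal{M}$; in (b), conservativity freezes the slice, (ii) gives $\delta_n\le\rho^n\delta_0$, and (iii) makes the increments summable, so the iterates are Cauchy with limit on the closed slice. Your side remarks are also sound: using (iii) to show that $\mathcal{M}$ lies in the fixed-point set avoids relying on the $S=\emptyset$ convention, and the nonemptiness of the slice that you flag (which the paper uses tacitly, since $\delta\le\rho\,\delta$ only forces $\delta=0$ and $\delta_0$ only bounds the increments when finite) is in fact guaranteed by the paper's definition of $S$ as a correction support, i.e.\ some normal point agrees with the input on $\bar S$.
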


\begin{proof}
We prove (a) and (b).

\medskip
\noindent\textbf{(a)} ($\Rightarrow$)
Let $\bfx\in\mathcal{M}$. By the convention,
normal samples have empty anomaly index set, i.e.\ $S=\emptyset$ and hence $\bar S=\{1,\dots,d\}$.
Applying (i) yields
\begin{equation*}
f(\bfx) = f_{\bar S}(\bfx) = \bfx_{\bar{S}} = \bfx,
\end{equation*}
so $\bfx$ is a fixed point of $f$.

\medskip
\noindent\textbf{(a)} ($\Leftarrow$)
Let $\bfx\in\mathbb{R}^d$ be a fixed point, i.e.\ $f(\bfx)=\bfx$, and let $S$ be its anomaly index set.
If $S=\emptyset$, then $\bfx$ is normal by definition and hence $\bfx\in\mathcal{M}$.

Assume $S\neq\emptyset$. Using (ii) with $\bfu=\bfx$ and $f_S(\bfx)=\bfx_S$ gives
\[
\operatorname{dist}\bigl(\bfx_S,\mathcal{M}_S(\bfx)\bigr)
=
\operatorname{dist}\bigl(f_S(\bfx),\mathcal{M}_S(\bfx)\bigr)
\le
\rho\,\operatorname{dist}\bigl(\bfx_S,\mathcal{M}_S(\bfx)\bigr).
\]
Since $\rho\in(0,1)$, we obtain
\[
\operatorname{dist}(\bfx_S,\mathcal{M}_S(\bfx))=0.
\]

By definition of $\mathcal{M}_S(\bfx)$, there exists a sequence
$(\bfy^{(n)})\subset\mathcal{M}$ satisfying
\[
\bfy^{(n)}_{\bar S}=\bfx_{\bar S}
\qquad\text{and}\qquad
\bfy^{(n)}_S \to \bfx_S .
\]
Hence $\bfy^{(n)}\to\bfx$ in $\mathbb{R}^d$.
Moreover, the set
\[
\{\bfy\in\mathcal{M}:\ \bfy_{\bar S}=\bfx_{\bar S}\}
\]
is closed as the intersection of the closed set $\mathcal{M}$ with an affine subspace.
Since each $\bfy^{(n)}$ belongs to a closed set and $\bfy^{(n)}\to\bfx$,
we conclude that the limit $\bfx$ also belongs to this set.
Therefore $\bfx\in\mathcal{M}$.

\medskip
\noindent\textbf{(b)}
Fix an anomalous sample $\hat{\bfx}$ with anomaly index set $S$ and define the iterates
\[
\bfx^{(0)}:=\hat{\bfx},
\qquad
\bfx^{(n+1)}:=f(\bfx^{(n)}).
\]

By (i), for all $n\ge 0$,
\[
\bfx^{(n)}_{\bar S}=\hat{\bfx}_{\bar S}.
\]

Consequently, the slice is constant along the iterates:
\[
\mathcal{M}_S(\bfx^{(n)})
=
\{\bfx_S:\bfx\in\mathcal{M},\ \bfx_{\bar S}=\bfx^{(n)}_{\bar S}\}
=
\{\bfx_S:\bfx\in\mathcal{M},\ \bfx_{\bar S}=\hat{\bfx}_{\bar S}\}
=: \mathcal{A}.
\]

Define
\[
\delta_n := \operatorname{dist}(\bfx^{(n)}_S,\mathcal{A}).
\]

Applying (ii) with $\bfu=\bfx^{(n)}$ yields
\[
\delta_{n+1}
=
\operatorname{dist}\bigl(\bfx^{(n+1)}_S,\mathcal{A}\bigr)
=
\operatorname{dist}\bigl(f_S(\bfx^{(n)}),\mathcal{M}_S(\bfx^{(n)})\bigr)
\le
\rho\,\operatorname{dist}\bigl(\bfx^{(n)}_S,\mathcal{M}_S(\bfx^{(n)})\bigr)
=
\rho\,\delta_n,
\]
and hence
\[
\delta_n \le \rho^n\delta_0
\qquad
\text{for all } n.
\]

We now show that $(\bfx^{(n)}_S)$ is a Cauchy sequence. By (iii),
\[
\|\bfx^{(n+1)}_S-\bfx^{(n)}_S\|_2
=
\|f_S(\bfx^{(n)})-\bfx^{(n)}_S\|_2
\le
C\,\operatorname{dist}\bigl(\bfx^{(n)}_S,\mathcal{M}_S(\bfx^{(n)})\bigr)
=
C\,\delta_n
\le
C\,\rho^n\delta_0.
\]

Therefore, for $m > n$,
\begin{equation*}
\begin{aligned}
\|\bfx^{(m)}_S-\bfx^{(n)}_S\|_2
&\le
\sum_{t=n}^{m-1}\|\bfx^{(t+1)}_S-\bfx^{(t)}_S\|_2
\le
C\delta_0\sum_{t=n}^{m-1}\rho^t
=
C\delta_0 \left(\sum_{t=0}^{m-1}\rho^t - \sum_{t=0}^{n-1}\rho^t \right) \\
&=
C\delta_0 \left(\frac{1 - \rho^m}{1 - \rho} - \frac{1 - \rho^n}{1 - \rho} \right)
\le
\frac{C\delta_0}{1-\rho}\,\rho^n
\xrightarrow[n\to\infty]{} 0.
\end{aligned}
\end{equation*}

Thus $(\bfx^{(n)}_S)$ converges to some limit
$\bfx^\ast_S\in\mathbb{R}^{|S|}$.

Since $\operatorname{dist}(\bfx^{(n)}_S,\mathcal{A}) = \delta_n \leq \rho^n\delta_0\to 0$
and $\bfx^{(n)}_S\to\bfx^\ast_S$, it follows that $\operatorname{dist}(\bfx^\ast_S,\mathcal{A})=0$.
Because $\mathcal{A}$ is closed, we conclude that $\bfx^\ast_S\in\mathcal{A}$.

Moreover, $\bfx^{(n)}_{\bar S}=\hat{\bfx}_{\bar S}$ for all $n$,
hence $\bfx^{(n)}\to\bfx^\ast$ where
$\bfx^\ast_{\bar S}=\hat{\bfx}_{\bar S}$ and $\bfx^\ast_S\in\mathcal{A}$.
By definition of $\mathcal{A}$, this implies $\bfx^\ast\in\mathcal{M}$,
which concludes the proof.
\end{proof}

The remaining question is under which conditions the model satisfies assumptions $(ii)$ and $(iii)$ of the theorem.
In the following, we provide an intuition together with a practical heuristic used in our experiments,
while leaving a rigorous formal investigation to future work.
Specifically, we rely on out-of-distribution images as a source of anomalous patterns.
The main intuition is that the resulting perturbations are predominantly orthogonal
to the manifold of normal data.

\paragraph{Corruption model.}
Let $p(\bfx)$ denote the distribution of normal data supported on a differentiable manifold
$\mathcal{M}\subset\mathbb{R}^d$.
A partially corrupted observation is generated as
\begin{equation}
\hat{\bfx} = \bfx + \alpha\,\boldsymbol{\delta},
\qquad \bfx\sim p,\quad \alpha\sim\pi,
\qquad \boldsymbol{\delta}_{N(\hat{\bfx})}=\mathbf{0},
\label{eq:corr_model}
\end{equation}
where $\pi$ is a transparency distribution on $(0,1]$.

We assume that the corruption vector is predominantly normal to the manifold. That is,
there exists $\eta\in[0,1)$ such that, for typical pairs $(\bfx,\boldsymbol{\delta})$,
\begin{equation}
\big\|\Pi_{T_{\bfx}\mathcal{M}}(\boldsymbol{\delta})\big\|
\le
\eta\,
\big\|\Pi_{N_{\bfx}\mathcal{M}}(\boldsymbol{\delta})\big\|,
\label{eq:ND}
\end{equation}
where $\Pi_{T_{\bfx}\mathcal{M}}$ and $\Pi_{N_{\bfx}\mathcal{M}}$ denote the orthogonal projections onto the tangent and normal subspaces of the manifold at $\bfx$, respectively.
In practice, we approximate this condition by setting
$\boldsymbol{\delta}_{A(\hat{\bfx})} = \bfy_{A(\hat{\bfx})} - \bfx_{A(\hat{\bfx})}$
for some out-of-distribution image $\bfy \notin \mathcal{M}$ that is sufficiently distant from the manifold.
Informally, this corresponds to using images that look noticeably different from normal examples.
Furthermore, we assume that $\pi$ assigns nonzero probability to arbitrarily small values of $\alpha$.

\paragraph{Compatible normal sources.}
For a partially corrupted input $\hat{\bfx}$ we define the set of normal samples compatible with $\hat{\bfx}$ as
\begin{equation}
\text{CNS}(\hat{\bfx})
=
\Big\{
\bfx\in\mathcal{M}\;:\;
\bfx_{N(\hat{\bfx})}=\hat{\bfx}_{N(\hat{\bfx})}
\ \text{and}\ 
\exists\,\alpha\in(0,1],\,\boldsymbol{\delta}\ \text{s.t.}\ 
\hat{\bfx}=\bfx+\alpha\boldsymbol{\delta}
\Big\}.
\label{eq:defC}
\end{equation}
We measure ambiguity on the corrupted region by the diameter
\begin{equation}
\mathrm{diam}\big(\text{CNS}(\hat{\bfx})\big)
:=
\sup_{\bfx,\bfx'\in\text{CNS}(\hat{\bfx})}
\|\bfx_{A(\hat{\bfx})}-\bfx'_{A(\hat{\bfx})}\|.
\end{equation}

We now show how our corruption process induces the contraction toward the manifold that ensures properties $(ii)$ and $(iii)$ in Theorem~\ref{T_08071253}.
\begin{proposition}
\label{P_09071047}
Consider the following unconstrained optimization problem:
\begin{equation}
\label{op_24041055}
\begin{aligned}
& \underset{\bftheta \in \mathbb{R}^k}{\text{minimize}}
& & 
\mathbb{E}_{q(\hat{\bfx})}\!\left[
\mathbb{E}_{p(\bfx)}\!\left[
\|f_{\bftheta}(\hat{\bfx})-\bfx\|^2
\;\middle|\;
\mathcal{C}(\bfx) = \hat{\bfx}
\right]
\right],
\end{aligned}
\end{equation}
under the corruption model \eqref{eq:corr_model}--\eqref{eq:ND} denoted by $\mathcal{C}$.
Let $f^*$ denote a Bayes-optimal solution of the above optimization problem.
Then there exists a constant $C \in (0,1)$ such that, for all admissible corrupted samples
$\hat{\bfx}$,
\begin{equation}
\label{E_09071118}
\mathrm{diam}\big(\mathrm{CNS}(f^*(\hat{\bfx}))\big)
<
C \cdot \mathrm{diam}\big(\mathrm{CNS}(\hat{\bfx})\big).
\end{equation}
\end{proposition}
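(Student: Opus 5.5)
The argument combines Theorem~\ref{T_23040830}, in the refined form (\ref{E_08071526}), with the geometry of the corruption model. First I identify the Bayes-optimal output. Repeating the argument of Lemma~\ref{lemma_08071708} with the conditioning event $\mathcal{C}(\bfx)=\hat{\bfx}$, every minimizer satisfies $f^*(\hat{\bfx}) = \mathbb{E}[\bfx \mid \mathcal{C}(\bfx)=\hat{\bfx}]$. By (\ref{eq:corr_model}), $\boldsymbol{\delta}_{N(\hat{\bfx})}=\mathbf{0}$, so $f^*_{N(\hat{\bfx})}(\hat{\bfx}) = \hat{\bfx}_{N(\hat{\bfx})}$. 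The corrupted block is then a posterior mean over normal sources:
\[
f^*_{A}(\hat{\bfx}) = \mathbb{E}\bigl[\bfx_A \mid \bfx \in \mathrm{CNS}(\hat{\bfx})\bigr],
\]
where $A := A(\hat{\bfx})$. In particular, $f^*_A(\hat{\bfx})$ lies in the closed convex hull of $\{\bfx_A : \bfx \in \mathrm{CNS}(\hat{\bfx})\}$.

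The second step is to write $f^*(\hat{\bfx})$ itself as a corruption of compatible sources and to measure the residual opacity. For each $\bfx \in \mathrm{CNS}(\hat{\bfx})$ we have $\hat{\bfx}_A - \bfx_A = \alpha \boldsymbol{\delta}_A$. Averaging this identity gives
\[
f^*_A(\hat{\bfx}) - \bfx_A = \mathbb{E}[\bfx'_A] - \bfx_A ,
\]
whose norm is at most $\mathrm{diam}(\mathrm{CNS}(\hat{\bfx}))$. So every compatible source of the new point $f^*(\hat{\bfx})$ must be within the old ambiguity radius of the posterior mean. Next I use the normality condition (\ref{eq:ND}). A candidate source $\bfz \in \mathrm{CNS}(f^*(\hat{\bfx}))$ differs from $f^*(\hat{\bfx})$ by $\alpha'\boldsymbol{\delta}'$, a vector that is predominantly normal. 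Meanwhile, differences between nearby points of $\mathcal{M}$ are predominantly tangent, up to curvature. Decomposing $\bfz_A - \bfz'_A$ into tangent and normal components at a reference source, the normal components of two sources must agree up to the $\eta$-weighted tangent parts. This is the step where the factor $\eta<1$ should enter, and I would aim for a bound of the form
\[
\mathrm{diam}\bigl(\mathrm{CNS}(f^*(\hat{\bfx}))\bigr) \le \frac{2\eta}{\sqrt{1+\eta^2}}\,\mathrm{diam}\bigl(\mathrm{CNS}(\hat{\bfx})\bigr) + o(\cdot).
\]

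The assumption that $\pi$ charges arbitrarily small $\alpha$ is used to ensure that the averaged point is itself admissible. It guarantees that $f^*(\hat{\bfx})$, which is a small-opacity perturbation of sources, is again in the range of $\mathcal{C}$, so $\mathrm{CNS}(f^*(\hat{\bfx}))$ is nonempty and the right-hand side of (\ref{E_09071118}) is meaningful.

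The main obstacle is turning this into a uniform constant $C\in(0,1)$. Condition (\ref{eq:ND}) is stated only for ``typical'' pairs, and the manifold has curvature, so the tangent/normal splitting is only approximately orthogonal across different sources. My plan is to restrict to admissible $\hat{\bfx}$ whose compatible sources lie within a tubular neighbourhood where the reach of $\mathcal{M}$ controls the variation of tangent spaces. On that neighbourhood I would set
\[
C := \tfrac{2\eta}{\sqrt{1+\eta^2}} + \kappa\,\mathrm{diam},
\]
with $\kappa$ a curvature bound, and shrink the neighbourhood until $C<1$. If this fails globally, I would state the result with $C$ depending on the reach and $\eta$, valid for inputs of bounded ambiguity. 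Even so, the requirement $\eta < 1$ alone may not yield strict contraction when $\eta$ is close to $1$. In that case I would strengthen the hypothesis to $\eta < 1/\sqrt{3}$, which makes $2\eta/\sqrt{1+\eta^2}<1$.
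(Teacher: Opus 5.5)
\paragraph{There is a genuine gap in your second step.}
Convexity gives $\|f^*_A(\hat{\bfx})-\bfx_A\|\le D:=\mathrm{diam}(\mathrm{CNS}(\hat{\bfx}))$ for every \emph{old} source $\bfx\in\mathrm{CNS}(\hat{\bfx})$. You then apply this to every \emph{new} source $\mathbf{z}\in\mathrm{CNS}(f^*(\hat{\bfx}))$, and that does not follow.

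The new sources are the points of the slice at which $f^*(\hat{\bfx})-\mathbf{z}$ lies in the normal-dominance cone. They need not belong to $\mathrm{CNS}(\hat{\bfx})$, and their distance to $f^*(\hat{\bfx})$ is governed by $\operatorname{dist}(f^*(\hat{\bfx}),\mathcal{M})$. Your argument controls that distance only through the trivial consequence of convexity, $\operatorname{dist}(f^*(\hat{\bfx}),\mathcal{M})\le D$. With that alone, the cone condition permits a new footprint of size about $2\eta D$, which is not a contraction unless $\eta$ is additionally small.

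This is why you are forced to strengthen the hypothesis to $\eta<1/\sqrt{3}$. Even then, the bound $\frac{2\eta}{\sqrt{1+\eta^2}}D$ is stated as a target rather than derived. The result you would end up with is a different, weaker proposition.

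\paragraph{The missing idea.}
The paper's proof turns on the following point. The compatible sources form the footprint, on the slice, of a cone with apex $\hat{\bfx}$. Their conditional mean sits near the center of that footprint, hence \emph{close to the slice}: exactly on it when the slice is flat, and within $\mathcal{O}(\kappa D^2)$ of it in general (the chord estimate given right after the proposition).

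So the corrected point is strictly closer to $\mathcal{M}$ than $\hat{\bfx}$. The footprint of a cone of fixed aperture scales like $\eta\,\operatorname{dist}(\cdot,\mathcal{M})$. Hence the new set of compatible sources shrinks by a factor of order $\eta\kappa D$, with no extra restriction on $\eta$.

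Small $\alpha$ is what lets the paper treat that nearer point as an admissible input governed by the same cone. What matters is that its cone is smaller. Non-emptiness of $\mathrm{CNS}(f^*(\hat{\bfx}))$ is irrelevant, since an empty set would only help you.

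\paragraph{On the global constant.}
Your doubt about a \emph{global} uniform $C$ is justified. Suppose $\hat{\bfx}$ is the center of a circular slice carrying a rotation-invariant distribution. Then every point of the circle is compatible, the posterior mean is the center again, and nothing contracts. The paper's argument is itself only local. The remedy, however, is to localize this distance-reduction argument, not to shrink $\eta$.
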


\begin{proof}
The assumption about the perturbations with normal-dominant components implies that locally the set
$\mathrm{CNS}(\hat{\bfx})$ is confined within a cone around the normal direction of
$\mathcal{M}$ in the corrupted coordinates of $\boldsymbol{\delta}$.
By Theorem~\ref{T_23040830}, the Bayes-optimal mapping produces the conditional mean of the compatible normal sources, which lies near the center of this cone and therefore yields a
\emph{blurry} single-step completion.
Because the training distribution contains transparent perturbations, i.e., arbitrarily small values of $\alpha$,
the corrected sample $\hat{\bfx}^{+} := f^*(\hat{\bfx})$ can only be generated from normal samples in a strictly smaller neighborhood of
$\mathcal{M}$ than $\hat{\bfx}$.

The corresponding reduction factor is controlled by the aperture of the local cone of compatible normal sources.
This aperture is determined by the maximal deviation from the normal-dominance condition in \eqref{eq:ND}.
Hence, under a uniform normal-dominance margin, there exists a constant $C\in(0,1)$ such that
equation (\ref{E_09071118}) holds for all admissible corrupted samples $\hat{\bfx}$.
This idea is illustrated in Figure~\ref{fig_proof}.
\end{proof}
\begin{figure}[t]
\centering
\includegraphics[scale = 0.75]{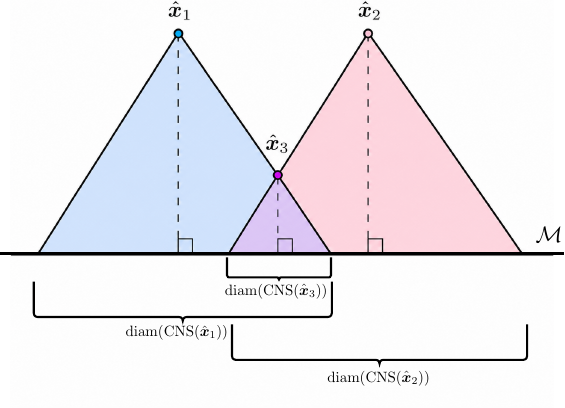}
\caption{
Illustration of the progressive reduction in the diameter of compatible normal sources under iterative application of the model.
The manifold of normal data $\mathcal{M}$ is represented schematically by the horizontal black line.
The leftmost triangle (blue) indicates the cone of compatible normal sources for the corrupted sample $\hat{\bfx}_1$, while the rightmost triangle (red) indicates the corresponding cone for $\hat{\bfx}_2$.
The small triangle (lilac) illustrates that, as the corrected sample moves closer to $\mathcal{M}$, the set of compatible normal sources becomes more restricted, resulting in a smaller diameter.
}
\label{fig_proof}
\end{figure}

Together, Theorem~\ref{T_23040830} and Proposition~\ref{P_09071047} provide the geometric mechanism underlying assumptions $(ii)$ and $(iii)$ of Theorem~\ref{T_08071253}.
Theorem~\ref{T_23040830} states that a Bayes-optimal solution $f^*$ maps each corrupted input to the conditional mean of the compatible normal sources determined by the corruption process $\mathcal{C}$.
Under the normal-dominance assumption, these compatible sources are confined to a cone around the normal direction of the manifold.
Proposition~\ref{P_09071047} shows that, after applying $f^*$, the diameter of this cone decreases by a uniform multiplicative factor.

Since the compatible normal sources are defined by fixing the normal coordinates and varying only the corrupted coordinates, their diameter measures the remaining ambiguity along the corresponding manifold slice.
Geometrically, this diameter corresponds to the local chord length between compatible normal completions on the same slice as illustrated in a cartoon in Fig.~\ref{fig_cns_chord}.
For a smooth local manifold slice with bounded curvature, the deviation of the conditional mean from the slice is controlled by this chord length.
\begin{figure}[b]
\centering
\includegraphics[width=0.5\linewidth]{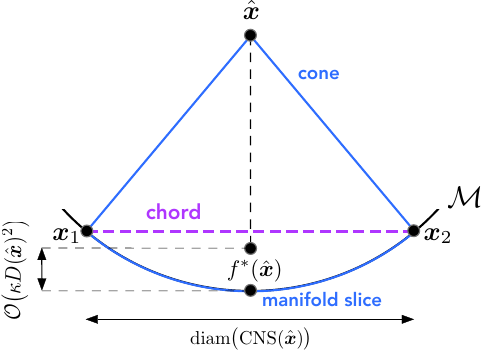}
\caption{
Geometric illustration of the compatible-source diameter for a corrupted sample $\hat{\bfx}$.
The compatible normal sources lie on the same local manifold slice (blue) and span a chord (purple), whose length corresponds to the diameter $\mathrm{diam}(\mathrm{CNS}(\hat{\bfx}))$.
The Bayes-optimal output $f^*(\hat{\bfx})$ is the conditional mean of these compatible sources and therefore lies off the curved slice.
For a smooth slice with local curvature bounded by $\kappa$, this off-manifold deviation is of order $\mathcal{O}(\kappa D(\hat{\bfx})^2)$.
Thus, reducing the compatible-source diameter also reduces the distance between the Bayes-optimal output and the manifold slice.
}
\label{fig_cns_chord}
\end{figure}
More precisely, if
$D(\hat{\bfx}) := \mathrm{diam}\bigl(\mathrm{CNS}(\hat{\bfx})\bigr)$
and $\kappa$ denotes an upper bound on the local curvature of the corresponding manifold slice, then the deviation of the conditional mean from the slice is of order
\[
\operatorname{dist}\bigl(f^*_S(\hat{\bfx}),\mathcal{M}_S(\hat{\bfx})\bigr)
=
\mathcal{O}\!\left(\kappa D(\hat{\bfx})^2\right).
\]
Thus, the deviation caused by averaging over a curved slice decreases quadratically with the compatible-source diameter.
By Proposition~\ref{P_09071047}, the compatible-source diameter satisfies
\[
D(f^*(\hat{\bfx})) \leq C D(\hat{\bfx}),
\qquad 0<C<1.
\]
Consequently, under uniform curvature control, the curvature-induced deviation decreases, up to curvature-dependent constants, by a factor of order $C^2$.

The effective contraction factor may vary with the local curvature and geometry of the slice.
However, under uniform curvature control and a uniform reduction of compatible-source diameter, these local contraction factors are bounded above by a common constant $\rho < 1$.
This explains how the multiplicative reduction of the compatible-source diameter induces the slice-contraction behavior required in assumption $(ii)$ of Theorem~\ref{T_08071253}.

Moreover, because the Bayes-optimal correction is the conditional mean of the compatible normal sources, the magnitude of the remaining correction is controlled by the same local ambiguity.
As the compatible-source diameter decreases, the conditional mean changes by a decreasing amount, and the correction magnitude vanishes near the corresponding manifold slice.
This yields the vanishing-correction behavior required in assumption $(iii)$ of Theorem~\ref{T_08071253}.
Thus, the decreasing diameter of the compatible normal sources explains why iterative application of $f^*$ moves the sample progressively closer to $\mathcal{M}$ while producing corrections of decreasing magnitude.

\paragraph{Role of transparency and corruption geometry.}
Training with transparent corruptions, i.e., a continuum of perturbation magnitudes, exposes the model to inputs
arbitrarily close to the normal-data manifold and therefore determines the local behavior of the learned mapping
in a neighborhood of $\mathcal{M}$.
However, transparency alone is not sufficient to guarantee contractive behavior.
A contraction bias is obtained only if the corruption directions that appear close to the manifold are predominantly
normal to $\mathcal{M}$.
In this case, small-amplitude corruptions correspond to displacements away from the manifold, and minimizing the
conditional reconstruction objective enforces corrections that reduce these displacements.
Conversely, if small-amplitude corruptions are dominated by tangent components, they correspond to valid variations
along the manifold and should not be removed.

Thus, transparency restricts the perturbation directions that influence the local Jacobian of the learned mapping:
near $\mathcal{M}$, the model is exposed only to corruption patterns whose normal component dominates.
Under this condition, the learned operator is biased toward contractive behavior in the corrupted coordinates.

\paragraph{Generalization beyond artificial corruptions.}
Theorem~\ref{T_08071253} guarantees that iterative application of the model converges to a fixed point on the normal-data manifold. However, an important subtlety is that this convergence is supported by an emerging contractive behavior toward the manifold. In this sense, the model does not merely correct a particular corrupted input, but contracts the ambient space toward the manifold by becoming constant along (possibly nonlinear) trajectories leading back to normal data.
We argue that the model adopts a filtering strategy: it progressively suppresses anomalous patterns, inpaints the missing content from the surrounding spatial context of normal regions, and finally stitches the reconstructed regions together. Figure~\ref{fig_layers} illustrates this behavior through the activation patterns of a trained network on two representative examples.
Since this strategy ignores the corruption pattern during inpainting, the model becomes constant along off-manifold trajectories toward the data manifold, substantially reducing the effective problem complexity and thereby mitigating the curse of dimensionality.
\begin{figure}[t]
\centering
\includegraphics[scale = 0.64]{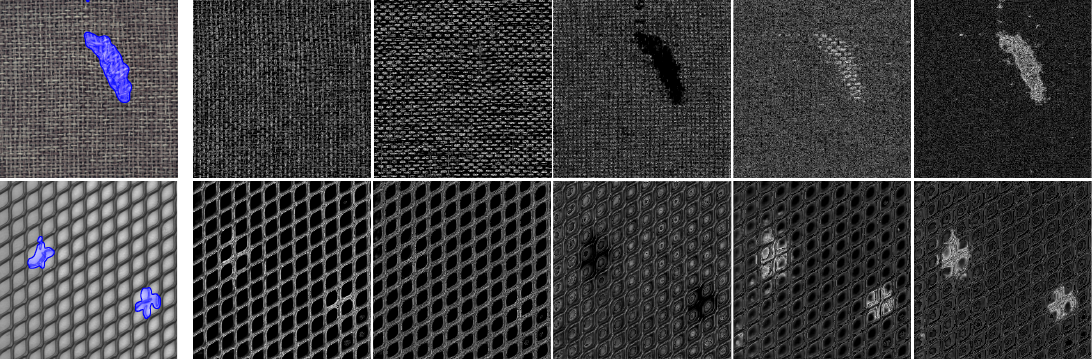}
\caption{
Illustration of activation patterns in individual layers of a trained model for two examples from the MVTec AD dataset, one from the carpet category and one from the grid category.
In each row, the first image shows the input image, with the anomalous region marked in blue. The remaining images show activations of selected feature maps from a layer close to the final output layer.
The selected feature maps support the intuition that the network corrects anomalous regions through a filtering strategy: suppressing anomalous patterns, inpainting the missing content, and stitching the reconstructed regions into the surrounding normal structure.
}
\label{fig_layers}
\end{figure}
By construction, a fully opaque corruption removes the information in the corrupted region and replaces it with a pattern that is independent of the reconstruction target. Hence, the corruption appearance contains no reliable information about the missing normal content.
A solution that relies on the corruption pattern can therefore only do so through memorization of observed corruption--reconstruction associations. Such a solution is highly unstable and unlikely to be favoured by stochastic gradient-based optimization.
A corruption-invariant solution instead ignores the corruption appearance and providing a very stable solution with respect to changes in the corruption pattern.
Contraction in off-manifold directions follows directly from this invariance. If the output does not depend on the corruption pattern, then different corrupted inputs with the same normal context are mapped to the same or similar reconstructions.
By Theorem \ref{T_23040830}, the optimal solution is independent of the corruption pattern. Thus, corruption-invariant correction is both the stable solution and the global optimum of the learning problem.

\paragraph{On limitations for logical anomaly detection.}
While the proposed formulation provides a principled framework for SAD, where anomalies manifest as deviations from locally predictable appearance structure, it cannot fully address logical anomalies. Projection-based models implicitly approximate a manifold of normal appearance statistics, $\mathcal{M}_{\mathrm{app}}$, embedded in the broader manifold of natural images, $\mathcal{M}_{\mathrm{nat}}$. Logical anomalies, however, may violate higher-level constraints such as object counts, spatial relations, or semantic consistency while still preserving locally plausible appearance statistics. Such constraints can therefore be understood as defining a further restriction $\mathcal{M}_{\mathrm{logic}} \subset \mathcal{M}_{\mathrm{app}}$: an image may lie outside $\mathcal{M}_{\mathrm{logic}}$ while remaining within $\mathcal{M}_{\mathrm{app}}$. Consequently, logical anomalies need not induce an off-manifold deviation with respect to the learned appearance manifold and may produce little or no projection residual.
Future work will investigate the integration of projection-based appearance modeling with structured modeling and graph-based approaches~\cite{math11122628, BauerSM17, Bauer2019, OptMPMP}.

\subsection{Regularization based on Jacobian Penalties}
\label{subsec:jacobian}
\noindent Geometrically, our autoencoder $f$ is designed to project corrupted inputs 
back onto the manifold of normal samples 
$\mathcal{M}$ while acting as the identity on $\mathcal{M}$.
Together, these requirements suggest that $f$ should be idempotent, i.e.\ $f(f(\hat{\bfx})) = f(\hat{\bfx})$.
Instead of adding an explicit penalty to the objective we instead propose
to constrain the Jacobian by adding the term $\|J_f(\hat{\bfx}) - I_d\|_F$ to the training objective (\ref{eq_21041620}),
where $J_f(\hat{\bfx})$ and $I_d$ denote the Jacobian of the model and the identity mapping, respectively.

To analyze how the regularizer promotes idempotency, we expand $f$ at $\bfx \in \mathcal{M}$ and evaluate it at $f(\hat{\bfx})$,
where $\hat{\bfx}$ is a corrupted version of $\bfx$.
We note that identity behavior is not enforced uniformly, but arises predominantly along tangent directions
of the manifold due to its interaction with the reconstruction error, which in turn induces contraction toward the manifold.
For the sake of argument, we consider the extreme case $J_{\bftheta}(\bfx) = I_d$, yielding
\begin{equation*}
\begin{aligned}
f(f(\hat{\bfx})) 
&= f(\bfx) + J_{\bftheta}(\bfx)\bigl(f(\hat{\bfx}) - \bfx\bigr) + R_2 \\
&= f(\hat{\bfx}) + (f(\bfx) - \bfx) + R_2,
\end{aligned}
\end{equation*}
where $R_2$ is the second-order Taylor remainder.
If the Jacobian is $L$-Lipschitz along the segment from $\bfx$ to $f(\hat{\bfx})$, then
$\|R_2\| \le L\|f(\hat{\bfx})-\bfx\|^2$.
Hence,
\begin{equation}
\label{E_22092221}
\underbrace{f(f(\hat{\bfx})) - f(\hat{\bfx})}_{\text{idempotency residual}}
=
\underbrace{f(\bfx) - \bfx}_{\text{bias on } \mathcal{M}}
+
\mathcal{O}\!\big(
\underbrace{\|f(\hat{\bfx})-\bfx\|^2}_{\text{reconstruction error}}
\big).
\end{equation}
Equation~(\ref{E_22092221}) thus links approximate idempotency directly to reconstruction accuracy and how well $f$ approximates the identity on the manifold.

More generally, projection operators onto $\mathcal{M}$ satisfy three equivalent structural properties:
(i) $\mathrm{Im}(f) = \mathcal{M}$,
(ii) $f(\bfx) = \bfx \Leftrightarrow \bfx \in \mathcal{M}$, and
(iii) $f$ is idempotent, i.e.\ $f(f(\bfx)) = f(\bfx)$.
Any two of these properties imply the third and therefore characterize a projection operator onto $\mathcal{M}$.

Our result in Equation~(\ref{E_22092221}) reveals that the regularization term couples these properties in the trained model:
(i) is reflected through reconstruction quality,
(ii) corresponds to the residual bias on the manifold, and
(iii) is captured by the idempotency residual.
In other words, the identity-anchored Jacobian regularization, together with a suitable corruption process
that promotes contraction toward the data manifold, locally stabilizes the model toward behavior that approximates the defining properties of a projection operator.

\begin{figure}[t]
\centering
\includegraphics[scale = 0.805]{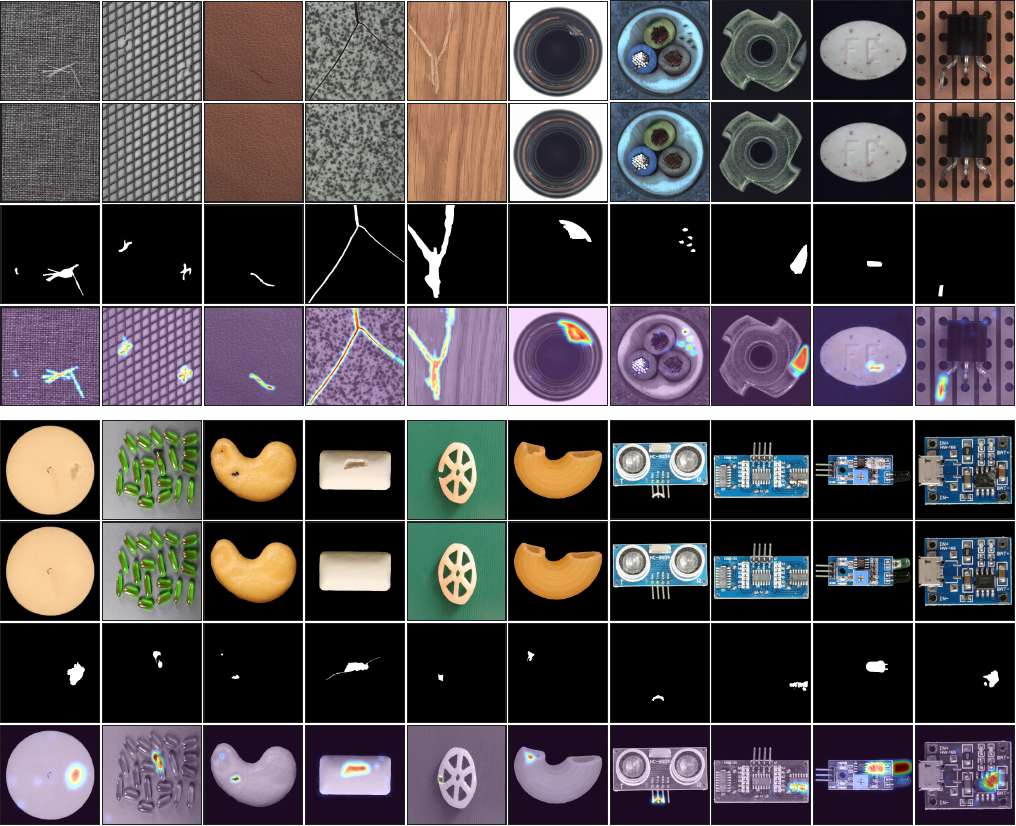}
\caption{
Representative examples of anomaly heatmaps on the MVTec~AD and VisA datasets produced by our model.
Each column shows (top to bottom) the input image, its reconstruction, the corresponding human-annotated anomaly mask for reference, and the predicted anomaly heatmap.
}
\label{fig_examples}
\end{figure}


\end{document}